\documentclass{article}

\usepackage{microtype}
\usepackage{graphicx}
\usepackage{subcaption}
\usepackage{booktabs} 

\usepackage{hyperref}

\usepackage[accepted]{icml2026}

\usepackage{amsmath}
\usepackage{amssymb}
\usepackage{mathtools}
\usepackage{amsthm}
\usepackage{tcolorbox}

\usepackage{tikz}
\usepackage{booktabs}
\usepackage{multirow}
\usepackage{xcolor}
\usepackage[table]{xcolor}
\usepackage{colortbl}

\definecolor{lightgray}{gray}{0.93} 

\usepackage{animate}
\usepackage[percent]{overpic}

\NewDocumentCommand{\inlineimage}{O{0.5} m}{%
  \raisebox{-0.2\baselineskip}{\includegraphics[height=#1\baselineskip]{#2}}%
}

\usepackage[capitalize,noabbrev]{cleveref}

\theoremstyle{plain}
\newtheorem{theorem}{Theorem}[section]
\newtheorem{proposition}[theorem]{Proposition}

\theoremstyle{definition}

\theoremstyle{remark}

\usepackage[textsize=tiny]{todonotes}

\icmltitlerunning{Orienting Latent Actions for Video World Modeling}

\begin{document}

\twocolumn[
  \icmltitle{Olaf-World: Orienting Latent Actions for Video World Modeling}



  \icmlsetsymbol{equal}{*}

  \begin{icmlauthorlist}
    \icmlauthor{Yuxin Jiang}{1,2}
    \icmlauthor{Yuchao Gu}{1}
    \icmlauthor{Ivor W. Tsang}{2}
    \icmlauthor{Mike Zheng Shou}{1}
  \end{icmlauthorlist}

  \icmlaffiliation{1}{Show Lab, National University of Singapore}
  \icmlaffiliation{2}{CFAR \& IHPC, Agency for Science, Technology and Research (A*STAR), Singapore}

  \icmlcorrespondingauthor{Mike Zheng Shou}{mike.zheng.shou@gmail.com}

  \icmlkeywords{Machine Learning, ICML}

\begin{center}
  \small \url{https://showlab.github.io/Olaf-World}
\end{center}

    
  \begin{center}

\animategraphics[width=\linewidth]{16}{fig/videosrc_teaser/frame}{00000}{00059}
\captionof{figure}{We present \textbf{Olaf-World}, an adaptable video world model pretrained with transferable latent actions learned via \textbf{Seq$\Delta$-REPA}, enabling (A) context-invariant zero-shot action transfer, (B) efficient adaptation to new action spaces with minimal labeled data  (\textit{e.g.,} 1 minute), and (C) improved generalization to novel scenes. Readers can \textcolor{magenta}{click and play} the video clips in this figure using Adobe Acrobat.}
\vspace{2mm}
\label{fig:teaser}

  \end{center}%
]



\printAffiliationsAndNotice{}  

\begin{abstract}
Scaling action-controllable world models is limited by the scarcity of action labels. 
While latent action learning promises to extract control interfaces from unlabeled video, learned latents often fail to transfer across contexts: they entangle scene-specific cues and lack a shared coordinate system.
This occurs because standard objectives operate only \emph{within} each clip, providing no mechanism to align action semantics across contexts.
Our key insight is that although actions are unobserved, their \emph{semantic effects} are observable and can serve as a shared reference.
We introduce \textbf{Seq$\Delta$-REPA}, a sequence-level control-effect alignment objective that anchors integrated latent action to temporal feature differences from a frozen, self-supervised video encoder.
Building on this, we present \textbf{Olaf-World}, a pipeline that pretrains action-conditioned video world models from large-scale passive video.
Extensive experiments demonstrate that our method learns a more structured latent action space, leading to stronger zero-shot action transfer and more data-efficient adaptation to new control interfaces than state-of-the-art baselines.
\end{abstract}

\section{Introduction}
\label{intro}

\begin{figure*}[!t]
\includegraphics[width=\linewidth]{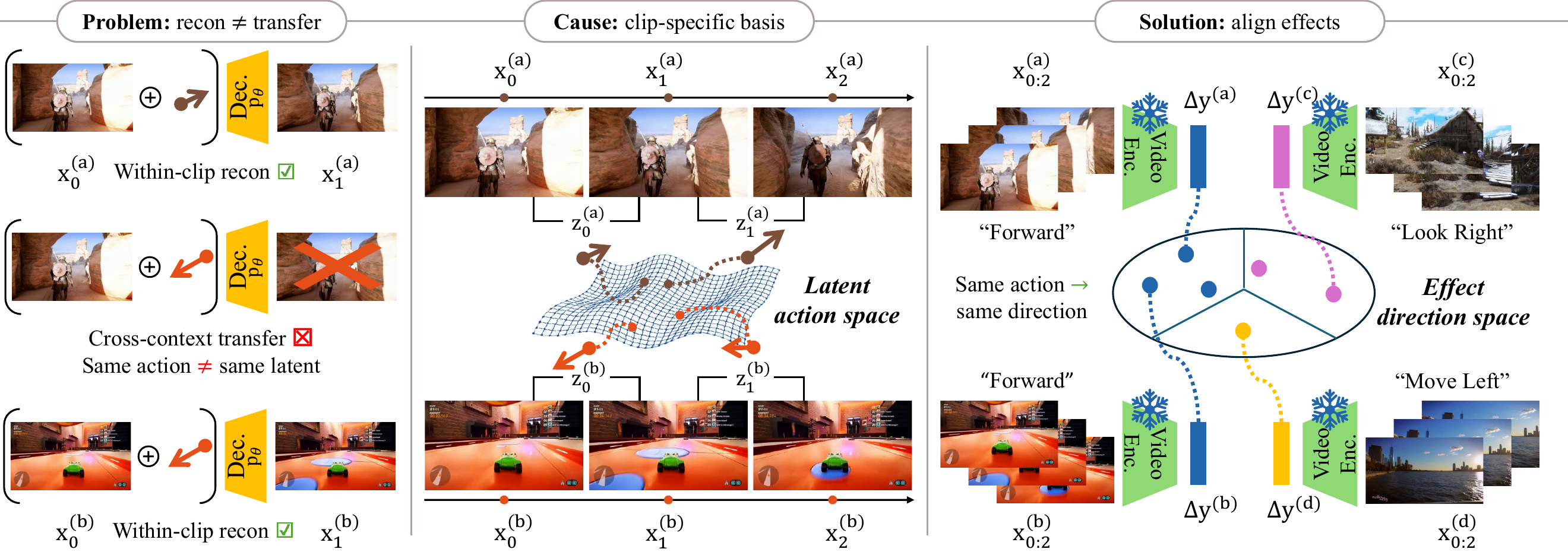}
\caption{\textbf{Latent action learning.}
\textbf{Problem}: transition-based latent action models (LAMs) can reconstruct well, but fail to transfer (the same semantic action, \textit{e.g.}, ``Forward'', maps to different latent directions across contexts).
\textbf{Cause:} the latent space is identified only up to a clip-specific basis, so there is no shared coordinate system.
\textbf{Solution:} Seq$\Delta$-REPA uses the observable effect direction $\Delta y$ from a frozen video encoder as a shared reference and aligns latent actions to it, yielding consistent action semantics across contexts.
}
\vspace{-2mm}
\label{fig:lam}
\end{figure*}

World models~\cite{ha2018world, hafner2023mastering, genie3_deepmind_2025, garrido2024learning, worldlabs2025marble} that predict future observations under actions are essential for planning and interactive simulation.
Recent video generative models~\cite{videoworldsimulators2024, wan2025wan, chen2025skyreels, kong2024hunyuanvideo, peng2025open, gao2025seedance, teng2025magi, huang2025self, gu2025long} contain rich priors about visual and physical dynamics from internet-scale data, making them promising backbones for video world modeling.
However, turning such models into action-controllable simulators still typically requires large-scale, frame-aligned action labels, which is costly and often tied to a specific domain or control interface~\cite{he2025matrix, sun2025worldplay, yu2025gamefactory, team2026advancing}. 

Latent action learning~\cite{edwards2019imitating, rybkin2018learning, schmidt2024learning, ye2025latent} offers a scalable solution by discovering an action space directly from unlabeled videos: an inverse-dynamics encoder infers latent actions $z_i$ from observed transitions $(x_i,x_{i+1})$, and a forward model predicts future frames conditioned on past frames and inferred actions.
Yet learning \emph{transferable} latent actions remains challenging. 
Actions are  considered transferable if they preserve \emph{control semantics} across contexts: transitions corresponding to the same underlying action should produce similar $z_i$ even when the visual context (appearance, viewpoint, layout, lighting, etc.\ ) varies.

We identify two failure modes.
First, inverse dynamics encoders often suffer from \emph{shortcut learning}~\cite{yang2025learning, garrido2026learninglatentactionworld}: $z_i$ may rely on context-dependent visual cues rather than the underlying controllable cause, entangling the learned actions with scene appearance.
Second, and more fundamentally, local reconstruction objectives are \emph{non-identifiable across contexts}~\cite{locatello2019challenging, khemakhem2020variational, wang2023posterior}. Because training is isolated to individual clips, the model is not encouraged to use a shared latent coordinate system across contexts, so the same semantic action (\textit{e.g.}, ``move forward'') can correspond to different latent directions in different environments (see  Figure~\ref{fig:lam}, Left).
Together, these issues prevent a shared control interface from emerging: identical action semantics need not map to a consistent region of latent space, undermining transfer and downstream controllability.

To address these, we propose \textbf{Seq$\Delta$-REPA}, a sequence-level objective that regularizes the latent space via \textbf{control-effect alignment}.
Our key insight is that while explicit action labels are unavailable, the \emph{semantic effect} of control is observable in video: transitions driven by similar underlying actions should induce similar \emph{semantic change} across contexts, despite appearance differences.
We formalize this by leveraging a frozen, self-supervised video encoder~\cite{tong2022videomae, assran2025v} to define a target \emph{effect direction} based on the net semantic change of a short clip (see Figure~\ref{fig:lam}, Right). 
Crucially, temporal feature differences naturally suppress spatial details and emphasize dynamics, making the reference stable under context shifts.
Seq$\Delta$-REPA then aligns the \emph{integrated} latent action inferred over the same window to this effect direction.
This provides a shared global reference that encourages consistent action meanings across contexts and discourages reliance on context-specific visual shortcuts.

Using the latent actions learned with Seq$\Delta$-REPA as a consistent control interface, we present \textbf{Olaf-World}, a pipeline for pretraining action-conditioned video world models on large-scale passive video.
Thanks to the structural alignment of our representation, it fundamentally improves the capabilities of the downstream world model (see Figure~\ref{fig:teaser}):
(i) Context-invariant zero-shot action transfer: latent actions extracted from demonstrations in one context can be reused to induce similar control effects in new contexts.
(ii) Efficient adaptation: when true labels are available, we learn a lightweight mapping to our pretrained action space, enabling adaptation with minimal data and parameter updates.
(iii) Better generalization to unseen context: because latent-action pretraining exposes the model to diverse transitions, Olaf-World generalizes better to novel scenes than models trained from scratch on labeled datasets.
%
%
%
In summary, our key contributions are as follows: 
\begin{itemize}
    \item We \textbf{characterize} cross-context non-identifiability in latent action learning, showing why step-wise reconstruction fails to learn transferable control. 
    \item We propose \textbf{Seq$\Delta$-REPA}, a novel sequence-level control-to-effect alignment objective that anchors latent action trajectory to semantic change derived from self-supervised video representations, encouraging context-invariant action semantics.
    \item We introduce \textbf{Olaf-World}, a pretraining pipeline that learns action-controllable video world models from passive video, enabling reliable cross-context action transfer and efficient adaptation with minimal labeled data.
\end{itemize}

\section{Related Work}
\label{related}

\subsection{Learning Latent Action from Videos}
Latent action models aim to infer latent controls from unlabeled video. They have been used either as (i) unified control interfaces for interactive world models~\cite{bruce2024genie, gaoadaworld, jang2025dreamgen}, or as (ii) action representations for policy learning, specifically to bridge cross-embodiment gaps in robotics~\cite{ye2025latent, bu2025univla, kim2025uniskill, chen2025villa, chen2025moto, yang2025learning} and (iii) enable observation-only offline RL~\cite{schmidt2024learning, nikulin2025latent}. 
%
Most LAMs learn an inverse model that infers per-step latents from observed transitions and a forward decoder trained with reconstruction or prediction objectives.
Both discrete (VQ-based)~\cite{schmidt2024learning, bruce2024genie, ye2025latent} and continuous latent~\cite{gaoadaworld, yang2025learning, garrido2026learninglatentactionworld} parameterizations have been explored.
%
Prior work recognizes that local transition-based objectives are sensitive to nuisance factors and action-correlated distractors, which can induce shortcut solutions and degrade downstream use~\cite{nikulin2025latent, bu2025univla, garrido2026learninglatentactionworld}.
To mitigate this, existing methods impose latent space 
constraints~\cite{gaoadaworld, garrido2026learninglatentactionworld} or 
design objectives that emphasize motion over pixel 
appearance~\cite{chen2025moto,bu2025univla,yang2025learning, bi2025motus}.
However, these methods operate on isolated clips and do not, by themselves, enforce that latent action semantics remain consistent across environments.
%
Seq$\Delta$-REPA fixes this by anchoring latent actions to a global effect reference via sequence-level alignment.

\subsection{Video World Model}
World models  predict future observations and support planning or interactive simulation in domains such as games, robotics, and driving~\cite{genie3_deepmind_2025, agarwal2025cosmos, gao2024vista, bar2025navigation}.
%
Most action-controllable video world models rely on \emph{explicit} control signals collected from interactive game engines (\textit{e.g.}, Unreal Engine, Minecraft), where frame-level keyboard/mouse inputs and other interaction annotations are logged as controls~\cite{oasis2024, alonso2024diffusion, valevski2025diffusion, xiao2025worldmem}. This yields strong controllability, but also ties the learned model to a specific action schema and  data-collection pipeline~\cite{he2025matrix, tang2025hunyuan, sun2025worldplay, hong2025relic, team2026advancing, ye2026mind}.
Latent-action world models instead infer a control interface directly from videos, enabling interaction without ground-truth actions~\cite{bruce2024genie, gaoadaworld, wang2025co, garrido2026learninglatentactionworld}.
However, their controllability and transfer ultimately depend on whether the learned latent action space is consistent across contexts---precisely the bottleneck our work addresses.

\subsection{Representation Alignment}
Alignment methods match internal features of generative models to large self-supervised encoders to improve semantics fidelity and training efficiency.
While initially focused on spatial features in image generation~\cite{yu2025repa, leng2025repa, singh2025matters}, recent video extensions have incorporated temporal structure, aligning the internal states of video generators to those of pretrained video encoders~\cite{zhang2025videorepa, chefer2025videojam, bhowmik2025moalign}.
These approaches primarily aim to improve the generator’s internal \emph{state} representations for higher-quality synthesis, \textit{i.e.}, feature-to-feature alignment.
In contrast, we use the pretrained spatiotemporal encoder~\cite{tong2022videomae, assran2025v} as a reference to supervise latent actions by matching semantic effects (feature change), \textit{i.e.}, a control-to-effect alignment.

\section{Method}
\label{method}

Our goal is to learn an action-controllable video world model from unlabeled video. We formulate the proposed Olaf-World into two stages: (1) learning a \emph{transferable} latent action space that disentangles dynamics from visual context (Section~\ref{sec:lam_ssl}), and (2) training a video generative world model conditioned on these latent actions (Section~\ref{sec:olaf}).


\begin{figure*}[!t]
    \centering
    \begin{subfigure}[t]{0.49\textwidth}
        \centering
        \includegraphics[width=\linewidth]{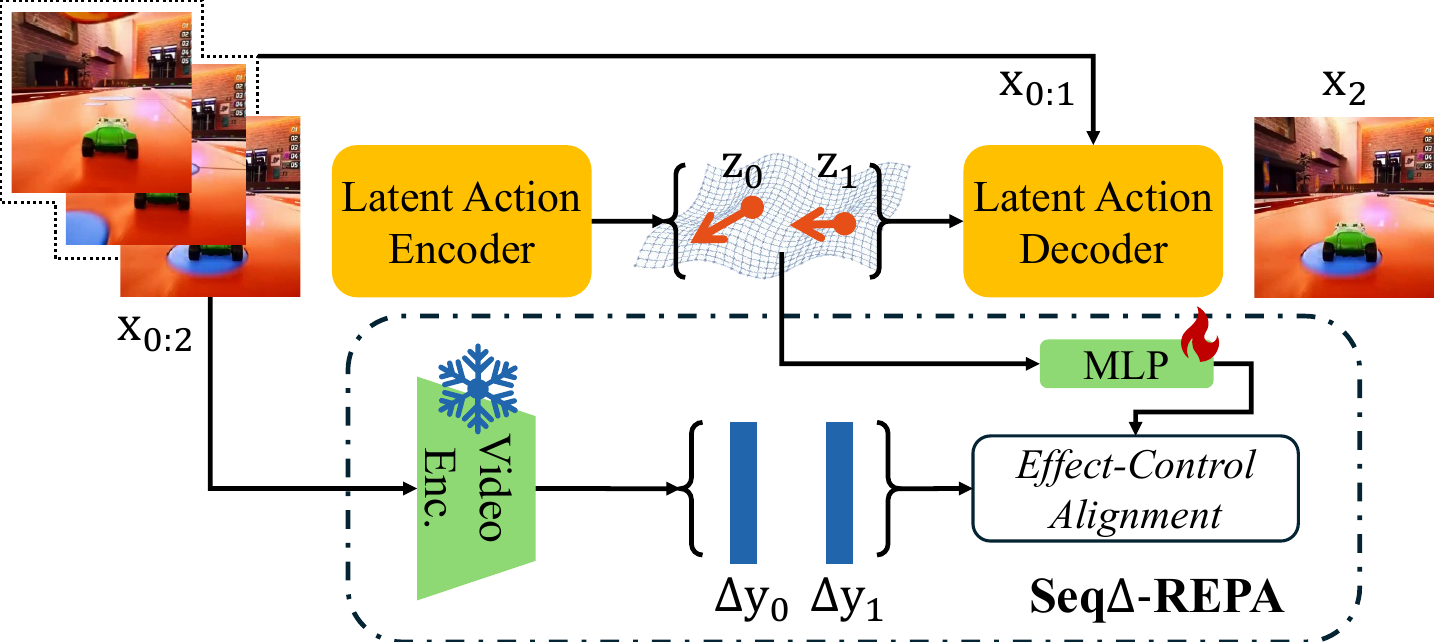}
        \caption{\textbf{Seq$\Delta$-REPA latent action learning.}}
        \label{fig:pipeline_lam}
    \end{subfigure}
    \hfill
    \begin{subfigure}[t]{0.49\textwidth}
        \centering
        \includegraphics[width=\linewidth]{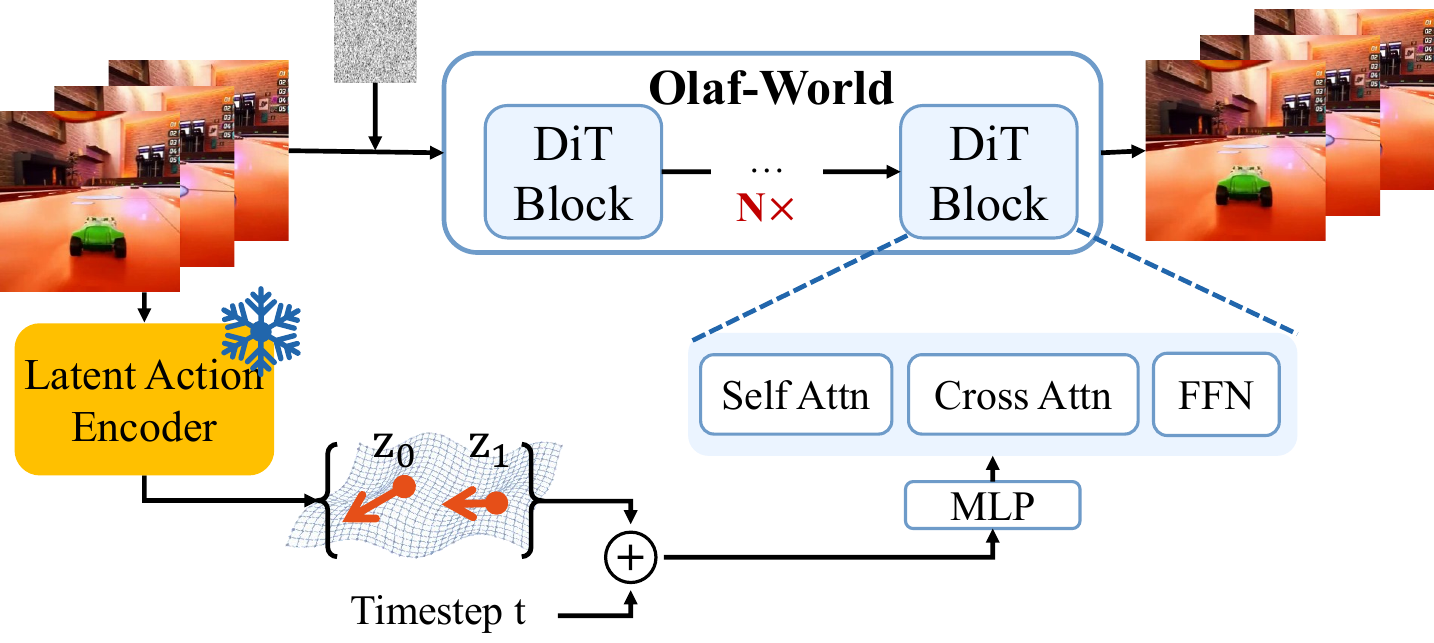}
        \caption{\textbf{Olaf-World action-aware pretraining.}}
        \label{fig:pipeline_world}
    \end{subfigure}

    \caption{\textbf{Overall pipeline.} (a) We train a latent action model (LAM) and encourage cross-context consistency by aligning \emph{action effects} in a frozen video-feature space using Seq$\Delta$-REPA.
    (b) We then apply the frozen LAM to unlabeled videos to extract latent-action sequences, and use them as a unified control interface to pretrain an action-conditioned video world model.}
    \vspace{-3mm}
    \label{fig:dynamics}
\end{figure*}

\subsection{Latent Action Model}
\label{sec:lam_ssl}
\paragraph{$\beta$-VAE.}
Given a clip $x_{0:K}$, we model each transition $(x_i,x_{i+1})$ with a latent action $z_i \in \mathbb{R}^{d_z}$ for $i=0,\dots,K\!-\!1$.
A standard latent action model~\cite{schmidt2024learning, gaoadaworld} consists of a causal inverse-dynamics encoder that produces $q_\phi(z_i \mid x_{0:i+1})$ and a forward decoder that predicts the next frame $p_\theta(x_{i+1} \mid x_i,z_i)$, to ensure the latent captures the dynamics required to explain the pixel shift.
The model is trained with the step-wise $\beta$-VAE objective~\cite{higgins2017betavae, alemi2017deep}:
\vspace{-1mm}
\begin{align}
\mathcal{L}^{\mathrm{VAE}}_{\theta,\phi}
=
\frac{1}{K}\sum_{i=0}^{K-1}\Big(
&-\mathbb{E}_{q_\phi(z_i\mid x_{0:i+1})}\big[\log p_\theta(x_{i+1}\mid x_i,z_i)\big]
\notag\\
&\quad+\beta\,\mathrm{KL}\!\left(q_\phi(z_i\mid x_{0:i+1})\,\|\,p(z_i)\right)
\Big),
\label{eq:vae}
\end{align}
where $p(z_i)$ is a fixed prior $\mathcal{N}(0,I)$.
While Eq.~\eqref{eq:vae} can achieve low one-step prediction error, it does not, on its own, ensure a semantically consistent action space across contexts. We summarize two failure modes:
\textbf{(i) Shortcut learning (context leakage):}  Because the posterior conditions on $x_{i+1}$, \textit{i.e.}, $q_\phi(z_i\mid x_{0:i+1})$, an expressive decoder can reduce the loss by encoding  context-dependent cues correlated with $x_{i+1}$ rather than a transferable control into $z_i$.
\textbf{(ii) Cross-context non-identifiability:} Since the loss never compares latents across trajectories, the latent coordinate system is unconstrained and may drift across contexts: the same semantic motion may map to different directions of latent space in different videos, breaking transfer (see Appendix~\ref{app:identifiability_proof} for a formal discussion).

\paragraph{Seq$\Delta$-REPA.}
To resolve these ambiguities, we introduce a sequence-level alignment constraint that anchors latent actions to an effect signal that is comparable across videos and contexts (see Figure~\ref{fig:pipeline_lam}).
Let $f$ be a frozen self-supervised video encoder (\textit{e.g.}, V-JEPA2 ViT~\cite{assran2025v}).
Given $x_{0:K}$, it outputs spatial-temporal visual tokens and we spatially pool to obtain per-frame descriptors $s_i\in\mathbb{R}^D$.
We define the clip's \emph{effect direction} as the net direction of feature change:
\vspace{-1mm}
\begin{equation}
\tau_* \;=\; \frac{1}{K}\sum_{i=0}^{K-1}\big(s_{i+1}-s_i\big)\in\mathbb{R}^{D}.
\label{eq:tau}
\end{equation}
Because $\tau_*$ is computed from temporal differences and averaged over time, it emphasizes coherent temporal change in feature space and $\Delta s$ is less sensitive to static appearance.

On the latent-action side, the inverse model infers a sequence of latent actions $z_{0:K-1}$. We aggregate them and map them into the encoder feature space:
\vspace{-1mm}
\begin{equation}
\bar{z}\;=\;\frac{1}{K}\sum_{i=0}^{K-1} z_i\in\mathbb{R}^{d_z},
\qquad
u\;=\;h_\psi(\bar{z})\in\mathbb{R}^{D},
\label{eq:control}
\end{equation}
where $h_\psi:\mathbb{R}^{d_z}\to\mathbb{R}^D$ is a trainable MLP projection head.

We then align the integrated control direction $u$ with the effect direction $\tau_*$ using cosine similarity:
\begin{equation}
\mathcal{L}^{\mathrm{Seq\Delta\text{-}REPA}}_\psi
\;=\;
1-\left\langle \mathrm{norm}(u),\, \mathrm{norm}(\tau_*)\right\rangle.
\label{eq:seqd}
\end{equation}
Unlike feature-to-feature alignment, Eq.~\eqref{eq:seqd} imposes a \emph{control-to-effect} constraint: 
it aligns integrated latent control to a shared notion of semantic change, encouraging consistent action meaning across contexts.

\paragraph{Final training objective.}
We train $(\theta,\phi,\psi)$ with:
\begin{equation}
\mathcal{L}_{\mathrm{LAM}}
= \mathcal{L}^{\mathrm{VAE}}_{\theta,\phi}
+ \lambda\,\mathcal{L}^{\mathrm{Seq\Delta\text{-}REPA}}_\psi,
\label{eq:final}
\end{equation}
while keeping the reference encoder $f$ frozen and $\lambda > 0$ is the loss weight.

\subsection{Olaf-World}
\label{sec:olaf}

\paragraph{Action-aware Pretraining.}
Given a video $x_{0:T}$, the frozen LAM generates per-frame latent actions $z_{0:T-1}\in\mathbb{R}^{d_z}$.
We build on a pretrained latent image-to-video diffusion transformer (DiT)~\cite{peebles2023scalable, chen2025skyreels} and train on sequences of frames paired with latent actions using the standard flow-matching objective~\cite{liu2023flow} (see Figure~\ref{fig:pipeline_world}).
Per-frame $z_t$ is linearly projected and added to the diffusion timestep embedding. 
The fused embedding is then mapped to per-block AdaLN-Zero modulation parameters that condition each DiT block~\cite{peebles2023scalable}.
Because the backbone operates on latents encoded by a 3D  video VAE, the input video is temporally compressed by a factor of $r{=}4$~\cite{wan2025wan, chen2025skyreels}.
Accordingly, we group every $r$ consecutive per-step actions into one latent-time conditioning vector, following~\citet{yu2025gamefactory}.
As a result, the world model is conditioned on LAM latents, providing a unified control interface that transfers across environments with different raw action conventions.

\paragraph{Specific-world Adaptation.}
In a target interactive environment, we  observe explicit actions $a_t$ from an environment-specific action space.
We learn a small action adapter $A_\eta$ that maps environment actions to latent actions: $\hat{z}_t = A_\eta(a_t)$, and control the pretrained world model using $\hat{z}_t$.
For a discrete action set $\mathcal{A}$, $A_\eta$ can be implemented as an embedding table $E\in\mathbb{R}^{|\mathcal{A}|\times d_z}$ with $\hat{z}_t = E[a_t]$.
We initialize $E$ with class-wise prototypes computed from the target data: for each action $a\in\mathcal{A}$, we run the frozen LAM on segments labeled with $a$ and set $E[a]$ to the average inferred latent action.
We then finetune (i) the action adapter and (ii) a small LoRA on the backbone using the same flow-matching objective.
This quickly specializes the model to new action spaces while preserving the globally aligned latent control semantics learned from passive video.

\section{Experiments}
\label{exps}

We validate the performance of Seq$\Delta$-REPA and the effect of the better latent actions to downstream applications through extensive experiments. In particular, we investigate the following questions:

\begin{itemize}
    \item \textbf{RQ1 (Structure):} Do learned latents encode action semantics that are linearly decodable and consistent across domains? (Section~\ref{sec:exp-latent-space})
    \item \textbf{RQ2 (Transfer):} Does this alignment enable zero-shot control transfer to new context? (Section~\ref{sec:exp-transfer})
    \item \textbf{RQ3 (Adaptation):} Does the aligned latent space enable data-efficient adaptation to specific control interfaces? (Section~\ref{sec:exp-adapt})
\end{itemize}

\subsection{Experimental Setup}

\noindent \textbf{Dataset.}
We train the latent action model and the action-conditioned world model on the 3D Rendering and City Walking categories of MiraData~\cite{ju2024miradata}.
For specific-world adaptation and controlled evaluation, we use MIND~\cite{ye2026mind}, an open-domain dataset with frame-aligned action labels collected in Unreal Engine 5.
MIND contains two disjoint subsets with different scenes and camera rigs: First-Person (\textsc{1st-P}) and Third-Person (\textsc{3rd-P}).
Both share the same 8-action label space: navigation (\texttt{W/S/A/D}: forward/back/left/right) and camera control (\texttt{$\uparrow$/$\downarrow$/$\leftarrow$/$\rightarrow$}: look up/down/left/right).
This split allows us to rigorously test cross-context transfer under both appearance shifts and viewpoint shifts.

\noindent \textbf{Implementation Details.}
Our latent action encoder is a spatiotemporal Transformer with causal temporal attention and latent dimension $d_z{=}32$.
It is trained with window size $K{=}16$ and $\lambda{=}0.02$.
Our world model is built on the SkyReels-V2-1.3B I2V DiT backbone~\cite{chen2025skyreels}, trained at 540p on clips of $T{=}97$ frames (25 latent frames).
All experiments are conducted on NVIDIA H200 GPUs. Additional implementation details are provided in Appendix~\ref{supp:impl-details}. 

\noindent \textbf{Baselines.}
We compare against AdaWorld~\cite{gaoadaworld}, a state-of-the-art latent-action world model. 
For a controlled comparison, we run AdaWorld with the same video-model backbone, data, and training/adaptation budgets as ours, while keeping its official latent-action training pipeline and configuration unchanged. Thus, differences isolate the effect of the latent-action learning objective.

\begin{figure}[!t]
    \centering
    \begin{subfigure}[t]{0.495\linewidth}
        \centering
        \includegraphics[width=\linewidth]{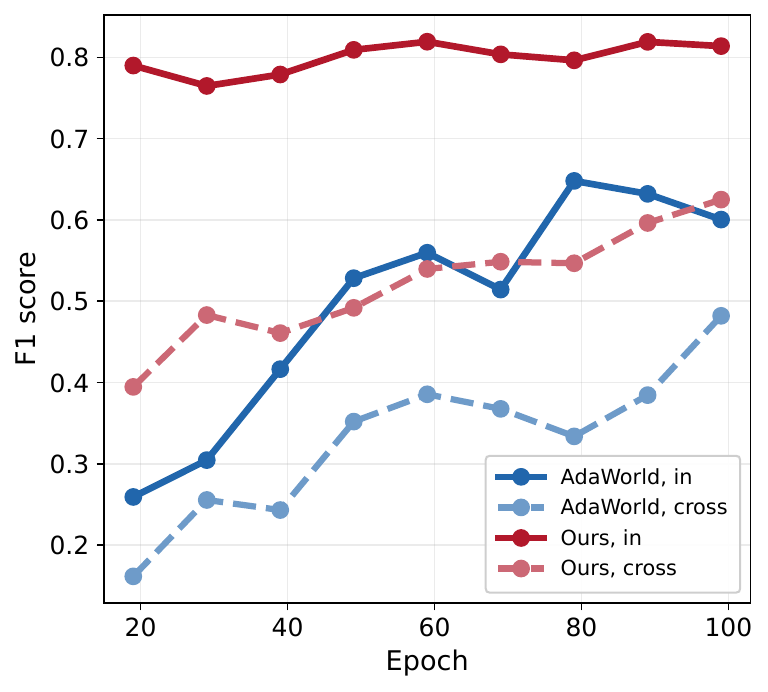}
        \caption{Probe: 1st$\rightarrow\{ \text{1st}, \text{3rd} \}$.}
        \label{fig:probe_1st}
    \end{subfigure}
    \hfill
    \begin{subfigure}[t]{0.495\linewidth}
        \centering
        \includegraphics[width=\linewidth]{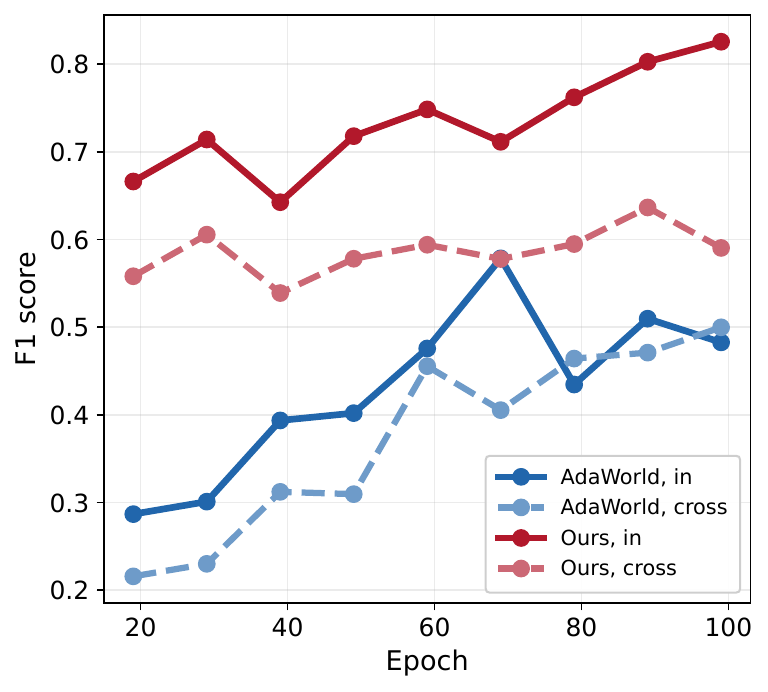}
        \caption{Probe: 3rd$\rightarrow\{ \text{3rd}, \text{1st} \}$.}
        \label{fig:probe_3rd}
    \end{subfigure}

    \caption{\textbf{In-/cross-domain linear probing over training.}
Solid: in-domain; dashed: cross-domain evaluation (1st-P$\leftrightarrow$3rd-P).
Seq$\Delta$-REPA consistently improves both in-domain and cross-domain probe performance.}

    \label{fig:linearprobe_f1}
    \vspace{-2mm}
\end{figure}

\begin{table}[t]
\centering
\small
\setlength{\tabcolsep}{7pt}
\caption{\textbf{In-/cross-domain linear probing} (Macro-F1, $\uparrow$). Gray columns denote cross-domain probing (source$\rightarrow$target).}
\label{tab:cross_probe}
\begin{tabular}{l|c>{\columncolor{gray!12}}c|c>{\columncolor{gray!12}}c}
\toprule
Method & 1st$\to$1st & 1st$\to$3rd & 3rd$\to$3rd & 3rd$\to$1st \\
\midrule
AdaWorld & 0.6004 & 0.4820  & 0.4827 & 0.4999 \\
Ours     & \textbf{0.8138} & \textbf{0.6250} & \textbf{0.8256} & \textbf{0.5904} \\
\bottomrule
\end{tabular}
\end{table}
\begin{figure}[!h]
    \centering
    \begin{subfigure}[t]{0.495\linewidth}
        \centering
        \includegraphics[width=\linewidth]{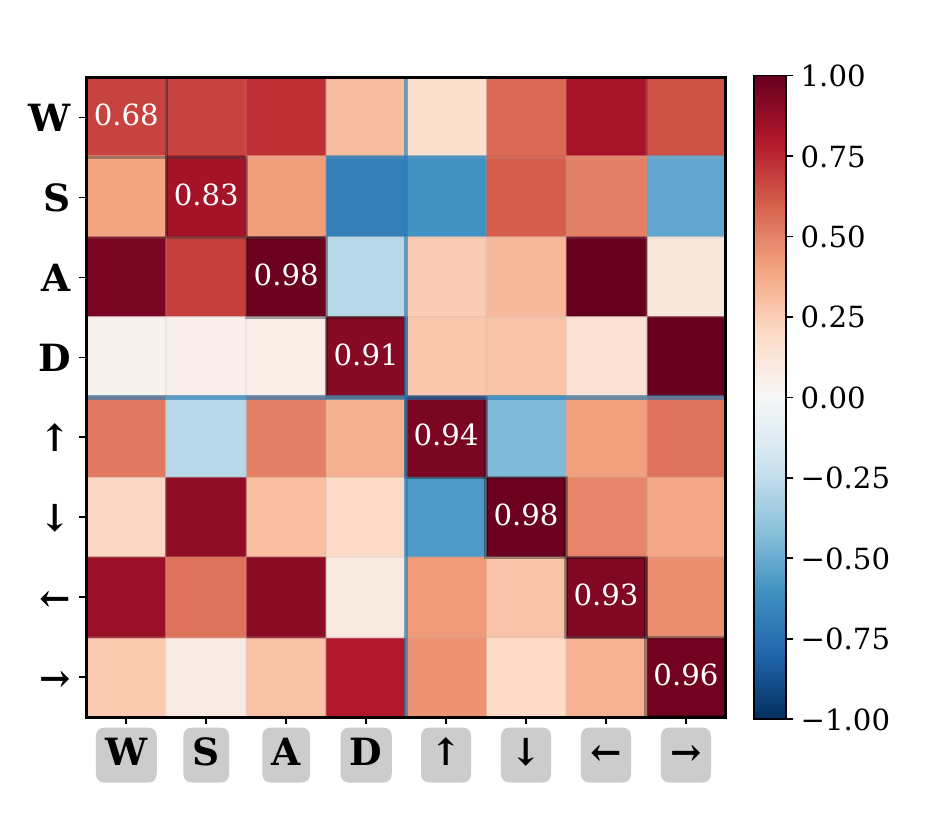}
        \vspace{-5mm}
        \caption{AdaWorld}
        \label{fig:protoheat_vae}
    \end{subfigure}
    \hfill
    \begin{subfigure}[t]{0.495\linewidth}
        \centering
        \includegraphics[width=\linewidth]{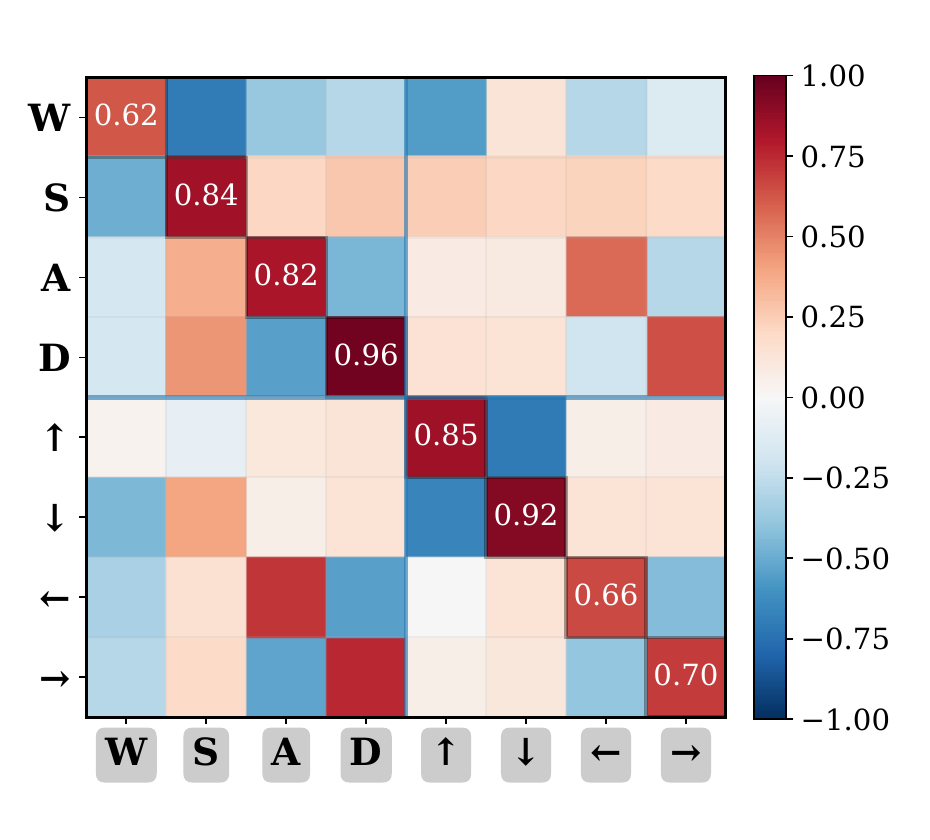}
        \vspace{-5mm}
        \caption{Ours}
        \label{fig:protoheat_repa}
    \end{subfigure}

    \caption{\textbf{Cross-domain action similarity.}
    Cosine similarity between per-action prototypes from \textsc{1st-P} (rows) and \colorbox{gray!30}{\textsc{3rd-P}} (columns).
    Seq$\Delta$-REPA produces a more diagonal-dominant matrix (stronger one-to-one matching across contexts).}
    \label{fig:protoheatmap}
    \vspace{-2mm}
\end{figure}

\noindent \textbf{Evaluation.}
We assess latent-action structure (probing + prototype similarity), transfer (action-sequence transfer), and adaptation (VBench~\cite{huang2024vbench} + RPE~\cite{hong2025relic}). Protocols and metric details are given per section, with full implementation in Appendix~\ref{supp:eval-details}.

\begin{figure*}[t]
\includegraphics[width=\linewidth]{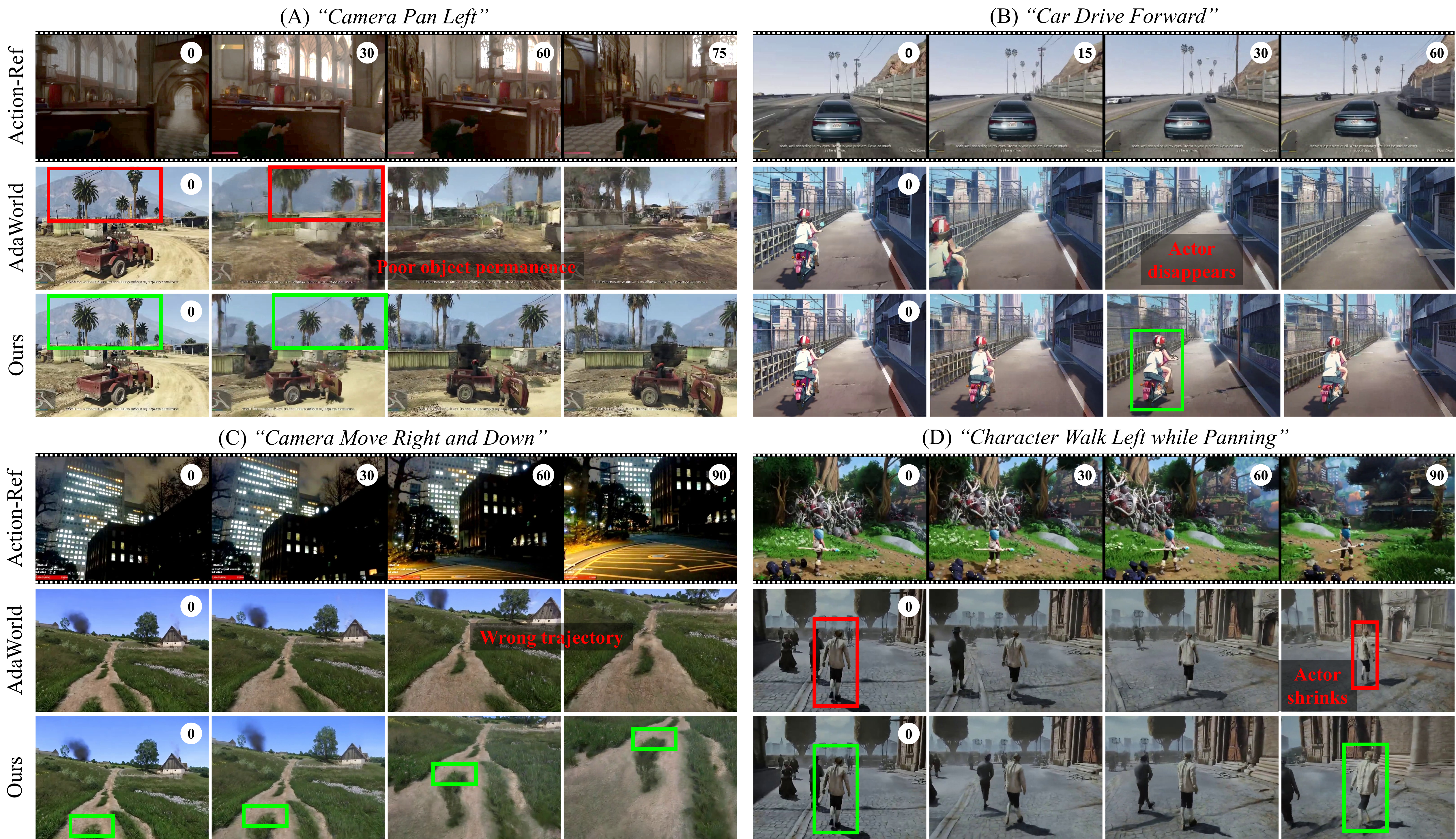}
\caption{\textbf{Zero-shot action-sequence transfer.} We extract an action sequence from a reference clip (top) and apply it zero-shot to a different target context. 
AdaWorld often shows temporal wash-out, agent drop-out, and motion drift, whereas Olaf-World performs better in target appearance preservation and motion faithfulness. 
Numbers denote frame indices. See the project page for video comparisons.
}
\vspace{-3mm}
\label{fig:actiontransfer}
\end{figure*}

\subsection{Latent Space Diagnostics}
\label{sec:exp-latent-space}

\subsubsection{Cross-Context Linear probing}
\label{sec:exp-probe}

\noindent\textbf{Setup.} 
We evaluate whether the learned latent action space $z_t$ is linearly separable and invariant to domain shifts. Following~\citet{zhang2022light}, we train a linear probe to predict the 8 atomic actions from $z_t$ for each checkpoint.
To test \emph{context invariance}, we use a \textbf{cross-domain probing protocol}: we train and validate the probe on \textsc{1st-P}, select the best checkpoint by in-domain validation F1 score, and then evaluate the same probe zero-shot on \textsc{3rd-P}. We repeat the reverse direction (\textsc{3rd-P}$\rightarrow$\textsc{1st-P}).
We report Macro-F1 for class-balanced comparison.

\noindent\textbf{Results.}
Figure~\ref{fig:linearprobe_f1} shows that Seq$\Delta$-REPA learns latent actions that are both more linearly decodable and more context-invariant. 
Across checkpoints, Olaf-World achieves higher in-domain Macro-F1 and consistently outperforms AdaWorld under cross-domain evaluation in both directions (\textsc{1st-P}$\leftrightarrow$\textsc{3rd-P}), indicating improved alignment of action semantics across viewpoint and appearance shifts.
Notably, gains are largest when probing on the more challenging \textsc{3rd-P} subset, where AdaWorld saturates at low Macro-F1 while ours remains substantially higher (Table~\ref{tab:cross_probe}).
Moreover, by aligning effect directions, our method bootstraps early-stage learning and reduces training-time fluctuations, yielding a more stable and transferable latent structure.
%

\subsubsection{Cross-Context Action Consistency}
\label{sec:exp-cross-sim}
\noindent\textbf{Setup.} To test whether action semantics are consistent across context, we compute an action prototype (class centroid) separately within each domain and visualize the cross-domain cosine similarity between prototypes (\textsc{1st-P} as rows, \textsc{3rd-P} as columns).
A well-aligned latent space should be diagonal-dominant, \textit{i.e.}, each \textsc{1st-P} action is most similar to its \textsc{3rd-P} counterpart.

\noindent\textbf{Results.}
Figure~\ref{fig:protoheatmap} compares cross-context prototype similarity between \textsc{1st-P} and \textsc{3rd-P}.
The Adaworld baseline (Figure~\ref{fig:protoheat_vae}) shows high similarity everywhere, meaning different actions in \textsc{1st-P} often look ``similar'' to multiple actions in \textsc{3rd-P}.
This indicates the latent space is not uniquely action-specific under context shift, \textit{i.e.}, weak cross-context identifiability.
In contrast, Ours (Figure~\ref{fig:protoheat_repa}) is visibly more contrastive: matching actions retain high similarity, while non-matching pairs are pushed closer to zero or negative.
This suggests Seq$\Delta$-REPA learns more consistent, transferable action semantics across viewpoint and appearance changes.
The remaining confusion appears in yaw ``look'' actions ($\leftarrow/\rightarrow$), which are expected to be less aligned because the same control actually induces different observable motion under egocentric vs.\ third-person camera rigs.

\begin{table*}[t]
\centering
\caption{\textbf{Adapting world models to target control domains with different amounts of labeled data (\#Adapt Videos).}
We report VBench visual metrics ($\uparrow$) and action accuracy via RPE ($\downarrow$).
Olaf-World achieves the lowest RPE in all settings, indicating  the most faithful action following. 
Best per domain and budget is in \textbf{bold}.}
\label{tab:efficient_adapt}
\resizebox{\linewidth}{!}{%
\begin{tabular}{lc|cccc|cccc}
\toprule
\multirow{2}{*}{\textbf{Method}} &
\multirow{2}{*}{\shortstack{\textbf{\# Adapt}\\\textbf{Videos}}}
& \multicolumn{4}{c|}{\textbf{\textsc{1st-P}}}
& \multicolumn{4}{c}{\textbf{\textsc{3rd-P}}} \\
\cmidrule(lr){3-6} \cmidrule(lr){7-10}
& 
& \multicolumn{2}{c}{\textbf{Visual Quality}}
& \multicolumn{2}{c|}{\textbf{Action Accuracy (RPE)}}
& \multicolumn{2}{c}{\textbf{Visual Quality}}
& \multicolumn{2}{c}{\textbf{Action Accuracy (RPE)}} \\
\cmidrule(lr){3-4} \cmidrule(lr){5-6} \cmidrule(lr){7-8} \cmidrule(lr){9-10}
&
& Image Qual. $\uparrow$ & Temp. Cons. $\uparrow$
& Trans $\downarrow$ & Rot. $\downarrow$
& Image Qual. $\uparrow$ & Temp. Cons. $\uparrow$
& Trans $\downarrow$ & Rot. $\downarrow$ \\
\midrule

DirectAct & \multirow{3}{*}{0}  & \textbf{0.7213} & 0.8993 & 0.0703 & 1.4311 & \textbf{0.6970} & 0.9086 & 0.0897 & 0.7968 \\
AdaWorld    &                     & 0.5600 & \textbf{0.9226} &  0.0470 & 1.0844 & 0.6102 & \textbf{0.9344} & 0.0723 & 0.6067 \\
Ours        &                     & 0.5400 & 0.9123 & \textbf{0.0387} & \textbf{0.8773 } & 0.5909 & 0.9203 & \textbf{0.0461} & \textbf{0.4873} \\
\midrule

\rowcolor{lightgray}
DirectAct &  & 0.5269 & 0.8828 & 0.0672 & 1.2822 & 0.6019 & 0.8851 & 0.0708 & 0.8543 \\
\rowcolor{lightgray}
AdaWorld    &  & 0.5623 & 0.8955 & 0.0318 & 0.6420 & \textbf{0.6033} & \textbf{0.8989} & 0.0525 & 0.4659 \\
\rowcolor{lightgray}
Ours        & \multirow{-3}{*}{1} &
\textbf{0.5726} & \textbf{0.9015} & \textbf{0.0284} & \textbf{0.4680} &
0.5844 & 0.8974 & \textbf{0.0348} & \textbf{0.3861} \\
\midrule

DirectAct & \multirow{3}{*}{50} & 0.5936  & \textbf{0.9345} & 0.0351 & 0.4527  & 0.6265  & 0.9286 & 0.0402 & 0.3846 \\
AdaWorld    &                     & 0.6177 & 0.9239 & 0.0263 & 0.3834  & 0.6459 & \textbf{0.9306} & 0.0393 & 0.3353 \\
Ours        &                     & \textbf{0.6312} & 0.9263  & \textbf{0.0230} & \textbf{0.3785} & \textbf{0.6486} & 0.9287 & \textbf{0.0222} & \textbf{0.2082} \\
\bottomrule
\end{tabular}
}
\vspace{-2mm}
\end{table*}







\begin{figure*}[!ht]
    \centering
    \includegraphics[width=\linewidth]{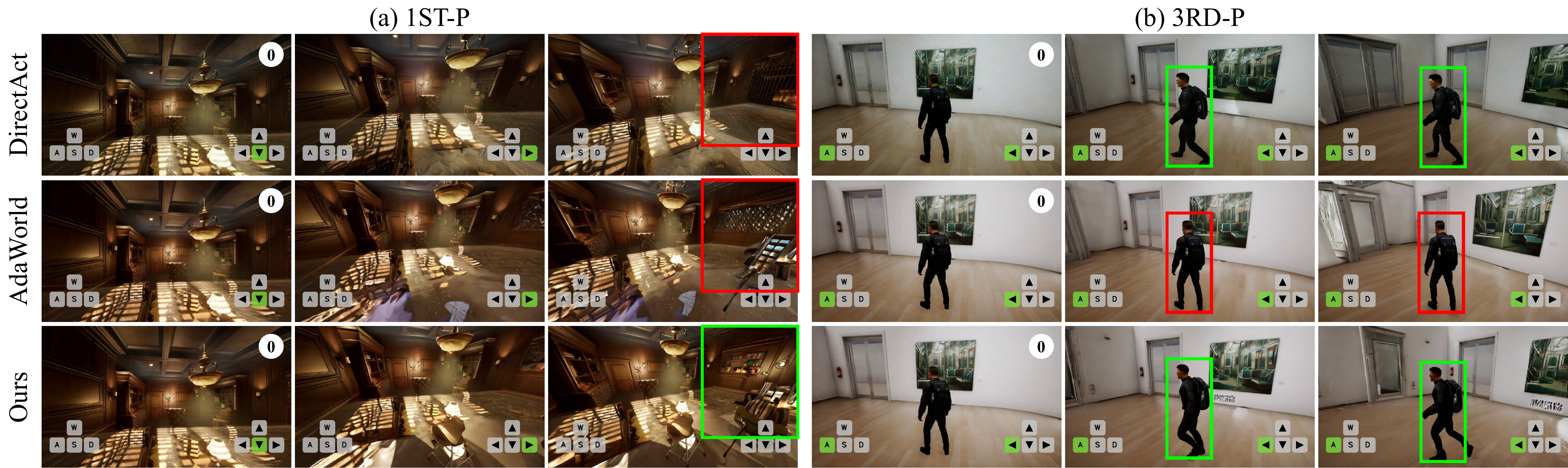}
    \caption{{\bf Qualitative comparison of action-conditioned generation after adaptation.} Given the same initial frame and action sequence, Olaf-World follows controls more faithfully and preserves appearance consistency as new regions are revealed. Actions are transition-aligned: $a_t$ corresponds to the change from $x_t$ to $x_{t+1}$. Zoom in for details.}
    \label{fig:adaptresults}    
    \vspace{-3mm}
\end{figure*}

\subsection{Zero-shot Action Transfer}
\label{sec:exp-transfer}

\noindent \textbf{Setup.}
We qualitatively evaluate whether the model follows the control signal $z_t$ independently of visual context.
We extract a latent action sequence $z_{0:T}$ from a reference clip and use it to drive generation from a different target initial frame.
Successful transfer requires reproducing the reference \emph{motion} while preserving the target \emph{appearance}.

\noindent \textbf{Results.} 
Figure~\ref{fig:actiontransfer} illustrates that Olaf-World transfers action sequences more reliably while better preserving the target context.
Across all four cases, AdaWorld often degrades under transfer: it shows
(i) temporal wash-out and instability (A),
(ii) loss of the controlled character or scale drift (B,D),
and (iii) trajectory drift toward generic motions that deviate from the reference (C).
In contrast, Olaf-World maintains scene and subject persistence while faithfully executing the intended motion.
%
Overall, these results indicate that Seq$\Delta$-REPA produces a control signal whose semantics remain action-specific under large context shifts.
Additional examples are provided in Appendix~\ref{supp:moreresults}.

\subsection{World Model Adaptation}
\label{sec:exp-adapt}

\subsubsection{Data-Efficient Adaptation}
\label{sec:adapt-efficient}

\noindent \textbf{Setup.}
We study how efficiently a pretrained video world model can be adapted to a target control interface under limited labeled interaction.
We compare:
(a) \textbf{DirectAct}, conditioning directly on ground-truth actions;
(b) \textbf{AdaWorld}, latent-action pretraining with vanilla $\beta$-VAE; 
(c) \textbf{Olaf-World}, latent-action pretraining with $\beta$-VAE + Seq$\Delta$-REPA.
All methods use the same video backbone and adaptation capacity (LoRA rank 16 with matched steps and optimizer).
We vary the labeled adaptation set size (\#Adapt Videos $\in \{0,1,50\}$; $\approx 0$, 1 minute, and 2 hours).
We measure video quality using VBench~\cite{huang2024vbench} and controllability with translational and rotational relative pose error (RPE) following~\citet{hong2025relic}.

\noindent \textbf{Quantitative results.}
Table~\ref{tab:efficient_adapt} shows that Olaf-World achieves the lowest RPE-trans and RPE-rot across all adaptation budgets on both \textsc{1st-P} and \textsc{3rd-P}, indicating the most faithful action following.
Compared to AdaWorld, Olaf-World consistently improves controllability while keepping comparable video quality, suggesting that Seq$\Delta$-REPA learns a latent control representation that is easier to adapt.
DirectAct with 0 videos reduces to standard image-to-video generation, explaining its strong visual scores but uninformative controllability.
With action supervision, DirectAct improves, but remains less controllable than latent-action pretraining under the same rank-16 LoRA setting. We expect the gap to narrow with larger adaptation capacity (\textit{e.g.}, higher LoRA rank or full fine-tuning).

\noindent \textbf{Qualitative results.}
Figure~\ref{fig:adaptresults} is consistent with the quantitative trends.
After full adaptation (50 videos), Olaf-World follows the intended controls more reliably and keeps the generated world visually consistent: when the camera turns or the agent moves sideways, newly visible regions are synthesized with stable details that match the initial frame (Figure~\ref{fig:adaptresults} (Left)).
In contrast, AdaWorld is less reliable under multi-key controls (\textit{e.g.}, \textsc{3rd-P} turn+move-left): rollouts often rotate without the desired leftward motion, leading to less faithful action-conditioned generation.

\begin{table}[!t]
\centering
\caption{\textbf{Generalization to unseen visual contexts after adaptation.}
Olaf-World achieves the lowest RPE, indicating the most faithful action following under appearance shift.}
\label{tab:ood_robuts}

\resizebox{\linewidth}{!}{%
\begin{tabular}{l|cc|cc} 
\toprule
\textbf{Model} 
& \multicolumn{2}{c}{\textbf{Visual \& Temporal Quality}} 
& \multicolumn{2}{c}{\textbf{Action Accuracy (RPE)}} \\ 
\cmidrule(lr){2-3} \cmidrule(lr){4-5}

& Image Qual. $\uparrow$ & Temp. Cons. $\uparrow$ 
& Trans $\downarrow$ & Rot. $\downarrow$ \\ 
\midrule

DirectAct & \textbf{0.6322} & 0.8585 & 0.0547 & 1.2343 \\ 
AdaWorld      & 0.6181 & 0.8719 & 0.0482 & 1.7063 \\ 
Ours          & 0.6274 & \textbf{0.8743} & \textbf{0.0478} & \textbf{1.2221} \\ 

\bottomrule
\end{tabular}
}
\label{tab:ood_robuts}
\vspace{-2mm}
\end{table}
\begin{figure}[!t]
    \centering
    \includegraphics[width=\linewidth]{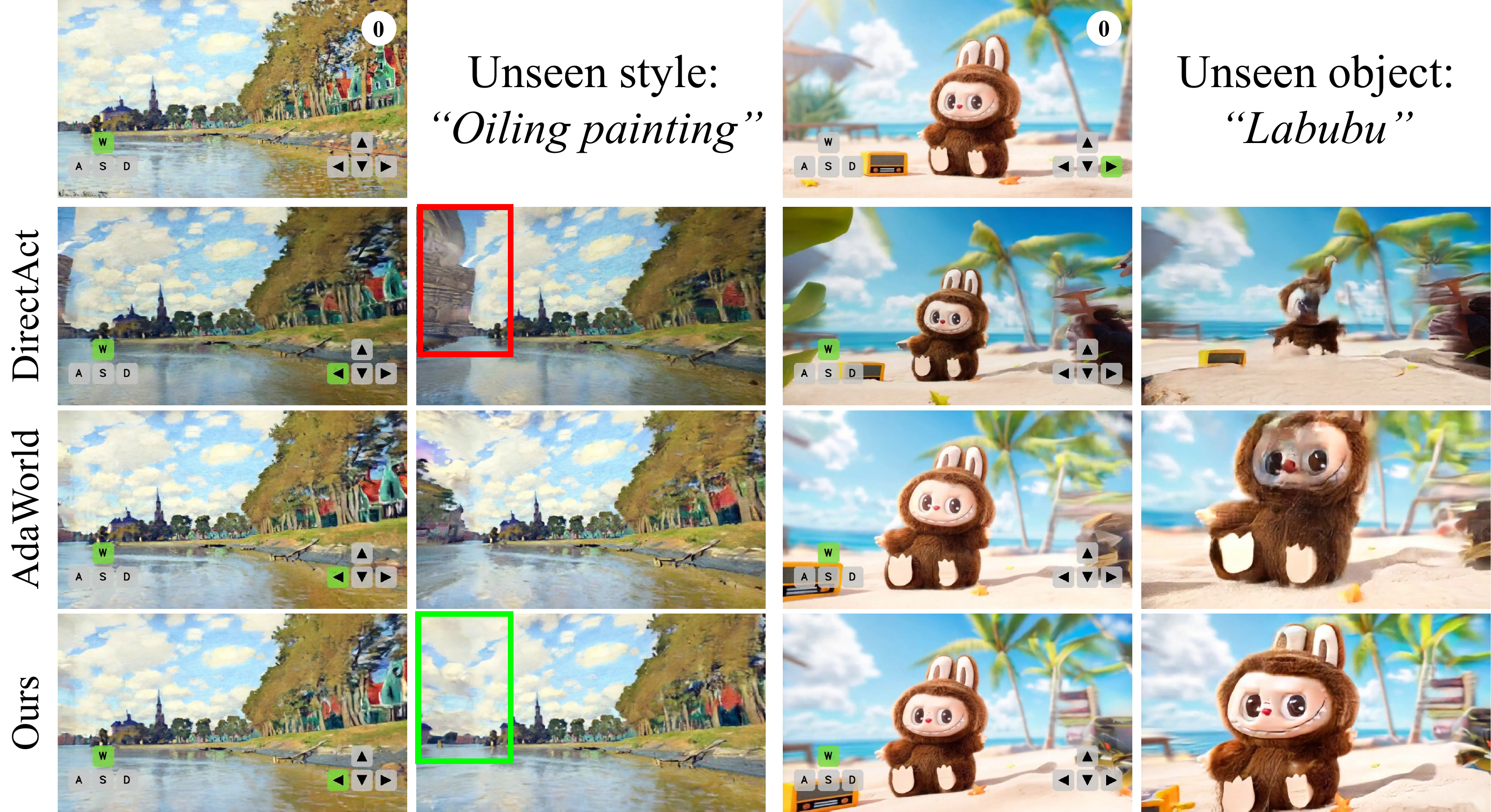}
    \caption{\textbf{Qualitative generalization under unseen contexts.}
    Left: baselines often break style consistency when completing newly revealed regions, while Right: baselines show subject drift, whereas Olaf-World better preserves a stable appearance under the same action sequence. Zoom in for details.
    }
    \label{fig:ood_robust}
    \vspace{-3mm}
\end{figure}

\subsubsection{Generalization to Unseen Contexts}
\label{sec:adapt-ood}

\noindent \textbf{Setup.}
We evaluate whether the adapted simulator remains reliable when exploring diverse visual worlds at test time. 
Using the fully adapted models from Section~\ref{sec:adapt-efficient} (\textsc{1st-P} action space), we construct an OOD test set of 50 initial frames spanning diverse styles and scenes.
We report the same metrics.

\noindent \textbf{Quantitative results.}
Table~\ref{tab:ood_robuts} shows that Olaf-World retains the best controllability under unseen visual contexts, achieving the lowest RPE.
This suggests the learned latent control remains usable when the appearance shifts, rather than overfitting to the adaptation visuals.

\noindent \textbf{Qualitative results.}
Figure~\ref{fig:ood_robust} highlights two representative cases.
In unseen styles, baselines often break style consistency when hallucinating newly revealed regions as the camera moves.
In unseen objects, baselines struggle to keep object identity while making its pose/scale/viewpoint evolve in a way that matches the commanded actions.
Olaf-World better preserves appearance consistency while producing action-consistent changes under the same action sequence.
Overall, these results indicate that latent-action pretraining improves OOD robustness of action-conditioned dynamics.

\subsection{Ablation Studies}
\subsubsection{Latent Action Learning}
\noindent \textbf{Seq-$\Delta$-REPA design.} 
We ablate key design choices in Seq$\Delta$-REPA: 
(i) \textbf{w/o $\Delta$}, which aligns static features rather than temporal effect directions; and
(ii) \textbf{w/o norm}, which removes $\ell_2$ normalization and replaces cosine alignment with a scale-sensitive MSE loss.
Figure~\ref{fig:linearprobe_f1_abl} reports in-/cross-context linear probing under the same protocol as Section~\ref{sec:exp-probe}.
Removing $\Delta$ causes a clear drop in Macro-F1 (see Table~\ref{tab:cross_probe_abl}), suggesting that aligning static features allows context-dependent spatial cues to leak into the action representation, so the probe becomes less separable and much less consistent across domains.
Without normalization, the alignment becomes sensitive to feature magnitude, which can vary across domains. This destabilizes the learned latents, resulting in not reliably good across both domains.
Overall, the full objective performs best and most consistently across domains, supporting the importance of aligning \emph{action effects} with a stable, scale-invariant similarity.
%

\begin{figure}[!t]
    \centering
    \begin{subfigure}[t]{0.495\linewidth}
        \centering
        \includegraphics[width=\linewidth]{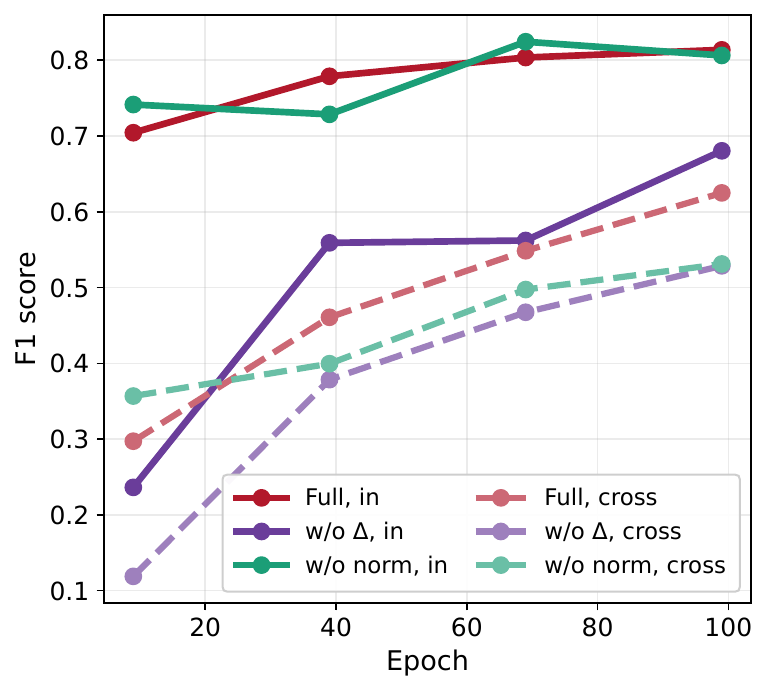}
        \caption{Probe: 1st$\rightarrow\{ \text{1st}, \text{3rd} \}$.}
        \label{fig:1st_abl}
    \end{subfigure}
    \hfill
    \begin{subfigure}[t]{0.495\linewidth}
        \centering
        \includegraphics[width=\linewidth]{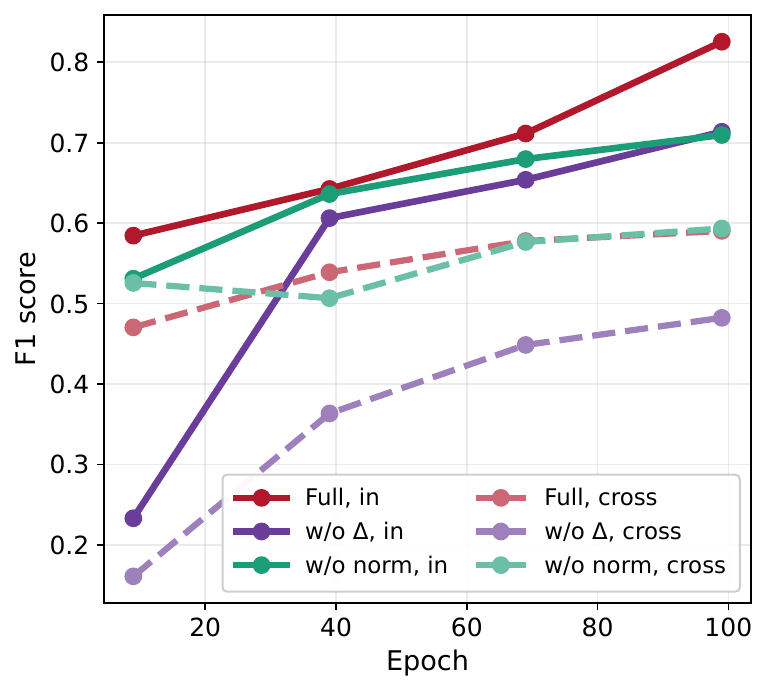}
        \caption{Probe: 3rd$\rightarrow\{ \text{3rd}, \text{1st} \}$.}
        \label{fig:3rd_abl}
    \end{subfigure}
    \caption{\textbf{Ablations of Seq$\Delta$-REPA on in-/cross-context linear probing.} Solid: in-domain; dashed: cross-domain evaluation.}
    \label{fig:linearprobe_f1_abl}
    \vspace{-2mm}
\end{figure}

\begin{table}[!t]
\centering
\small
\setlength{\tabcolsep}{7pt}
\caption{\textbf{Seq$\Delta$-REPA ablations} (Macro-F1, $\uparrow$). Gray columns denote cross-domain probing (source$\rightarrow$target).}
\begin{tabular}{l|c>{\columncolor{gray!12}}c|c>{\columncolor{gray!12}}c}
\toprule
\textbf{Method} & \textbf{1st$\to$1st} & \textbf{1st$\to$3rd} & \textbf{3rd$\to$3rd} & \textbf{3rd$\to$1st} \\
\midrule
w/o $\Delta$ & 0.6805 & 0.5287  & 0.7137 & 0.4823 \\
w/o norm   & 0.8064 & 0.5311& 0.7096 & \textbf{0.5934} \\
Full    & \textbf{0.8138} & \textbf{0.6250} & \textbf{0.8256} & 0.5904 \\
\bottomrule
\end{tabular}
\vspace{-3mm}
\label{tab:cross_probe_abl}
\end{table}
\newcommand{\best}[1]{\textbf{#1}}
\newcommand{\second}[1]{\underline{#1}}

\begin{figure*}[!t]
\centering

\begin{subtable}[t]{0.485\textwidth}
\centering
\caption{\textbf{Data budget sweep (fixed adaptation capacity)}}
\label{tab:data_budget_table}

\begingroup
\setlength{\tabcolsep}{4pt}
\renewcommand{\arraystretch}{1.08}
\scriptsize

\resizebox{\linewidth}{!}{%
\begin{tabular}{lcccc}
\toprule
\textbf{\#Vids} &
\multicolumn{2}{c}{\textbf{Visual \& Temporal Quality}} &
\multicolumn{2}{c}{\textbf{Action Accuracy (RPE)}} \\
\cmidrule(lr){2-3}\cmidrule(lr){4-5}
& \textbf{Image Qual. $\uparrow$} & \textbf{Temp. Cons. $\uparrow$}
& \textbf{Trans $\downarrow$} & \textbf{Rot $\downarrow$} \\
\midrule
0   & 0.5400 & 0.9123 & 0.0387 & 0.8773 \\
1   & 0.5726 & 0.9015 & 0.0284 & 0.4680 \\
3   & \best{0.6542} & \best{0.9274} & 0.0304 & 0.4187 \\
5   & 0.6171 & 0.9139 & 0.0284 & 0.4893 \\
10  & 0.6311 & 0.9218 & 0.0271 & 0.4416 \\
25  & \second{0.6321} & 0.9239 & \second{0.0250} & \second{0.3989} \\
50  & 0.6312 & \second{0.9263} & \best{0.0230} & \best{0.3785} \\
\bottomrule
\end{tabular}%
}
\endgroup
\end{subtable}\hfill
\begin{subtable}[t]{0.485\textwidth}
\centering
\caption{\textbf{LoRA rank sweep (fixed data budget)}}
\label{tab:lora_rank_table}

\begingroup
\setlength{\tabcolsep}{4pt}
\renewcommand{\arraystretch}{1.08}
\scriptsize

\resizebox{\linewidth}{!}{%
\begin{tabular}{lcccc}
\toprule
\textbf{Rank} &
\multicolumn{2}{c}{\textbf{Visual \& Temporal Quality}} &
\multicolumn{2}{c}{\textbf{Action Accuracy (RPE)}} \\
\cmidrule(lr){2-3}\cmidrule(lr){4-5}
& \textbf{Image Qual. $\uparrow$} & \textbf{Temp. Cons. $\uparrow$}
& \textbf{Trans $\downarrow$} & \textbf{Rot $\downarrow$} \\
\midrule
16   & 0.6312 & 0.9263 & 0.0230 & 0.3785 \\
32   & 0.6265 & 0.9249 & 0.0230 & 0.3915 \\
64   & \best{0.6394} & 0.9257 & 0.0251 & 0.3633 \\
128  & 0.6309 & \best{0.9304} & \second{0.0213} & 0.3202 \\
256  & \second{0.6372} & \second{0.9265} & 0.0220 & \best{0.2928} \\
Full & 0.6267 & 0.9210 & \best{0.0185} & \second{0.2980} \\
\bottomrule
\end{tabular}%
}
\endgroup
\end{subtable}

\vspace{6pt}

\begin{subfigure}[t]{0.49\textwidth}
\centering
\begin{subfigure}[t]{0.49\linewidth}
\centering
\includegraphics[width=\linewidth]{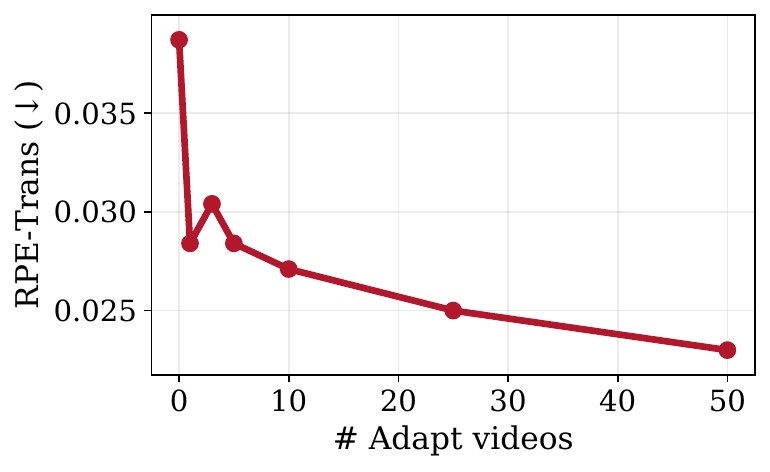}
\caption{RPE-Trans vs. \#Videos}
\end{subfigure}\hfill
\begin{subfigure}[t]{0.49\linewidth}
\centering
\includegraphics[width=\linewidth]{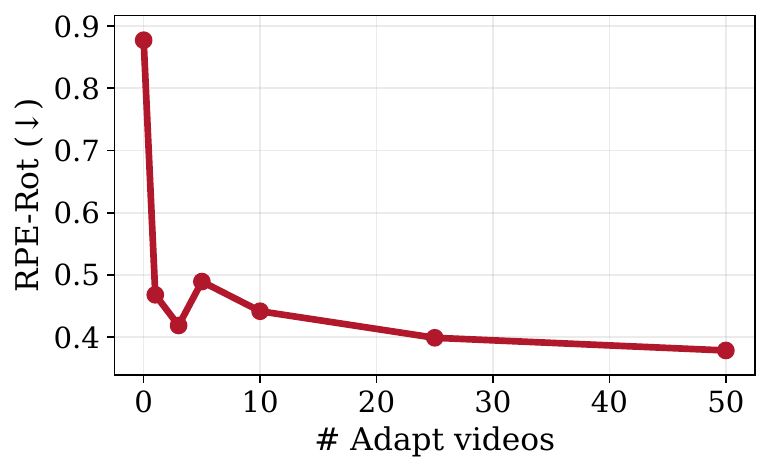}
\caption{RPE-Rot vs. \#Videos}
\end{subfigure}
\end{subfigure}\hfill
\begin{subfigure}[t]{0.49\textwidth}
\centering
\begin{subfigure}[t]{0.49\linewidth}
\centering
\includegraphics[width=\linewidth]{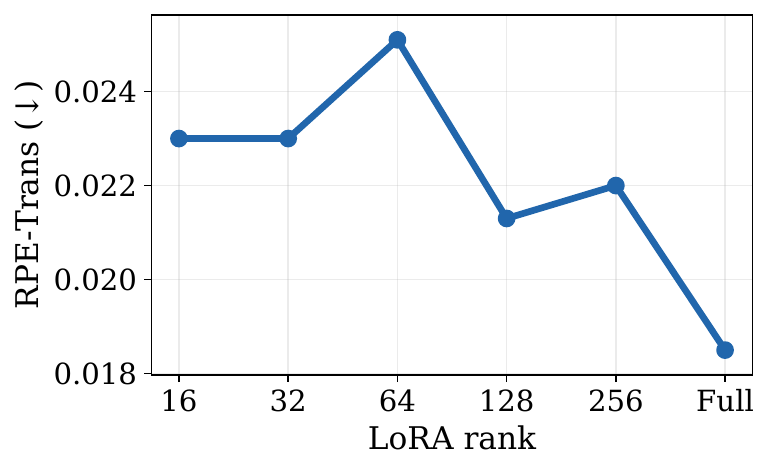}
\caption{RPE-Trans vs. Rank}
\end{subfigure}\hfill
\begin{subfigure}[t]{0.49\linewidth}
\centering
\includegraphics[width=\linewidth]{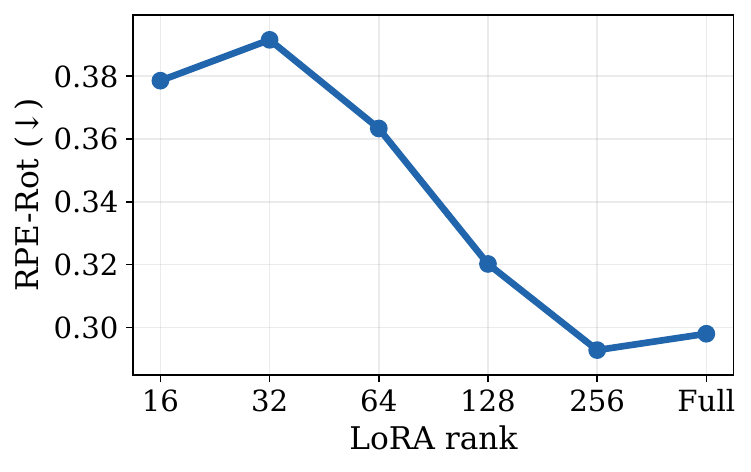}
\caption{RPE-Rot vs. Rank}
\end{subfigure}
\end{subfigure}

\caption{\textbf{World model adaptation scaling.}
\textbf{Top:} quantitative results when varying the labeled target-domain data budget with LoRA rank fixed ($r{=}16$; left), and when varying LoRA rank with the data budget fixed to 50 videos (right).
\textbf{Bottom:} corresponding scaling curves for action-following accuracy measured by RPE-Trans and RPE-Rot (lower is better).
Across both sweeps, video quality remains comparable, while controllability improves with more labeled adaptation videos and larger adaptation capacity.
\textbf{Bold} and \underline{underline} denote the best and second-best results within each column, respectively.
}
\label{fig:adapt_scaling_ablation}
\vspace{-3mm}
\end{figure*}

\noindent \textbf{Reference encoder sensitivity.}
We study the effect of the frozen self-supervised reference encoder $f$ used to compute target effect directions.
Table~\ref{tab:encoder_abl} reports results under the same probing protocol.
For efficiency, all models in this sensitivity study are trained with the same 10-epoch schedule.
Despite this shorter schedule, clear trends emerge: video encoders, including V-JEPA2~\cite{assran2025v} and VideoMAEv2~\cite{wang2023videomae}, substantially outperform the image-only encoder DINOv3~\cite{simeoni2025dinov3}.
This suggests that temporal modeling in the reference encoder is important for constructing transferable effect-direction targets.
Both video encoders remain effective despite different pretraining objectives, indicating that Seq-$\Delta$-REPA is not tied to a specific backbone but benefits from a frozen representation space with temporal consistency and sensitivity to video dynamics.


%

\begin{table}[t]
\centering
\small
\caption{\textbf{Alignment to different visual reference encoders} (Macro-F1, $\uparrow$). Gray columns denote cross-domain probing (source$\rightarrow$target).
\textsc{None} denotes the AdaWorld baseline}
\label{tab:encoder_abl}
\setlength{\tabcolsep}{4pt}
\begin{tabular*}{\columnwidth}{@{\extracolsep{\fill}}l|c>{\columncolor{gray!12}}c|c>{\columncolor{gray!12}}c}
\toprule
\textbf{Ref. encoder} & \textbf{1st$\to$1st} & \textbf{1st$\to$3rd} & \textbf{3rd$\to$3rd} & \textbf{3rd$\to$1st} \\
\midrule
V-JEPA2      & \textbf{0.7044} & 0.2972 & 0.5845 & 0.4704 \\
VideoMAEv2   & 0.6626 & \textbf{0.4697} & \textbf{0.6616} & \textbf{0.5377} \\
DINOv3       & 0.2334 & 0.1782 & 0.2869 & 0.1910 \\
\textsc{None}     & 0.1849 & 0.1382 & 0.2105 & 0.1593 \\
\bottomrule
\end{tabular*}
\vspace{-5mm}
\end{table}
\vspace{-1mm}

\subsubsection{World model adaptation}

\noindent\textbf{Data budget.}
We study how downstream adaptation scales with labeled target-domain supervision.
Beyond the main-experiment budgets $\{0,1,50\}$, we evaluate $\{0,1,3,5,10,25,50\}$ labeled videos, corresponding to approximately $\{0,1,6,13,26,60,120\}$ minutes of supervision.
Here, the $0$-video setting denotes zero-shot transfer without target-domain action labels.
Table~\ref{tab:data_budget_table} shows that action accuracy improves substantially with more supervision, with the steepest gains in the low-data regime.
Video quality remains comparable, suggesting that additional labels mainly improve control alignment rather than visual fidelity.


\noindent\textbf{LoRA rank.}
We study adaptation capacity by varying the LoRA rank under the fixed $50$-video setting.
We evaluate ranks $\{16,32,64,128,256\}$ and include full-parameter fine-tuning as an upper-capacity reference.
Table~\ref{tab:lora_rank_table} shows that higher ranks generally improve action accuracy while video quality remains largely stable.
We use rank 16 in the main experiments as an efficient default, and this ablation confirms that our conclusions do not hinge on a specific rank choice.

\section{Conclusion}
\label{concl}

We identify a key limitation in unsupervised latent action learning: \emph{cross-context non-identifiability}.
Inverse-dynamics objectives do not identify a global action basis,  producing context-entangled latents that transfer poorly.
We propose Seq$\Delta$-REPA, a sequence-level objective that anchors latent actions to \emph{action effects} measured as feature differences from a  self-supervised video encoder, encouraging context-invariant semantics.
Building on these latents, we introduce Olaf-World, a scalable latent-action world-modeling framework that improves zero-shot action transfer and enables data-efficient adaptation to new control spaces.

\textbf{Future work.}
In robotics, effect-aligned latent actions could serve as transferable \emph{skills} that bridge embodiments via an embodiment-specific action-to-skill adapter, \textit{e.g.}, human$\rightarrow$robot. We provide more discussion in Appendix~\ref{supp:limitation}.





\section*{Acknowledgements}
This research is supported by the Ministry of Education, Singapore, under the Academic Research Fund Tier 1 (FY2023).
Yuxin Jiang is supported by the A*STAR ACIS Scholarship.



\section*{Impact Statement}
This paper presents work whose goal is to advance the field of Machine
Learning. There are many potential societal consequences of our work, none
which we feel must be specifically highlighted here.





\bibliography{main}
\bibliographystyle{icml2026}

\clearpage
\appendix
\onecolumn
\section*{Appendix}
\label{sec:appendix}

The document provides supplementary information not elaborated on in our main paper due to space constraints.
It includes a formal analysis of cross-context non-identifiability (Section~\ref{app:identifiability_proof}), implementation details (Section~\ref{supp:impl-details}), evaluation protocols (Section~\ref{supp:eval-details}), additional results (Section~\ref{supp:moreresults}), and a discussion of limitations and future work (Section~\ref{supp:limitation}).
We also provide a project page (\url{https://showlab.github.io/Olaf-World}) with video visualizations that are essential for evaluating the temporal quality of the generated world models.

\section{Formal Analysis of Cross-Context Non-Identifiability}
\label{app:identifiability_proof}

We formalize why standard local inverse-dynamics training signals do not, by themselves, identify a shared latent-action coordinate system across contexts. The key issue is a latent-coordinate symmetry~\cite{locatello2019challenging, khemakhem2020variational, wang2023posterior}: the same transition predictions can be realized under different (context-dependent) reparameterizations of the latent codes.

\subsection{Setup}
Let $c$ index a context (\textit{e.g.}, viewpoint or scene). For each transition $(x_t, x_{t+1})$ from context $c$, a latent-action encoder $E$ and decoder $D$ are trained using the local prediction objective
\begin{equation}
\mathcal{L}_{\text{pred}}(E,D)
=\mathbb{E}_{c}\,\mathbb{E}_{(x_t,x_{t+1})\sim c}\Big[\ell\big(x_{t+1},\, D(x_t, E(x_t,x_{t+1}))\big)\Big],
\end{equation}
where $\ell(\cdot,\cdot)$ is any reconstruction/prediction loss (e.g., $\ell_2$).

\subsection{Proposition (context-dependent latent-coordinate symmetry)}
\begin{proposition}
\label{thm:latent_symmetry}
Fix any family of bijections $\{G_c:\mathbb{R}^{d_z}\to\mathbb{R}^{d_z}\}_c$ (one per context).
Define a new encoder/decoder pair by
\begin{align}
E'(x_t,x_{t+1}) &:= G_{c}\!\big(E(x_t,x_{t+1})\big), \\
D'(x_t,z) &:= D\!\big(x_t,\, G_{c}^{-1}(z)\big),
\end{align}
for transitions $(x_t,x_{t+1})$ from context $c$.
Then $\mathcal{L}_{\text{pred}}(E',D')=\mathcal{L}_{\text{pred}}(E,D)$.
\end{proposition}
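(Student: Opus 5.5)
The plan is a pointwise verification. Fix a context $c$ and a transition $(x_t,x_{t+1})$ drawn from $c$. We show that the prediction of $(E',D')$ on this transition coincides exactly with the prediction of $(E,D)$. Equality of the per-sample losses then follows, and taking the expectations over $(x_t,x_{t+1})\sim c$ and over $c$ gives $\mathcal{L}_{\text{pred}}(E',D')=\mathcal{L}_{\text{pred}}(E,D)$.

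The core step is a one-line substitution. By definition,
\begin{equation*}
D'\big(x_t, E'(x_t,x_{t+1})\big) = D\big(x_t,\, G_c^{-1}(G_c(E(x_t,x_{t+1})))\big) = D\big(x_t, E(x_t,x_{t+1})\big).
\end{equation*}
This uses only that $G_c$ is a bijection, so $G_c^{-1}\circ G_c=\mathrm{id}$ on $\mathbb{R}^{d_z}$. Since $\ell$ is evaluated on identical arguments, we get $\ell(x_{t+1}, D'(\cdot))=\ell(x_{t+1},D(\cdot))$ for every sample. No property of $\ell$ is needed; in particular, no convexity and no choice of metric.

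The only delicate point is well-definedness. The construction lets $E'$ and $D'$ depend on the context $c$, so we must say how $c$ is known. The cleanest route is to regard $c$ as part of the input, or equivalently as a function of the transition. This is possible if contexts have disjoint supports, or more generally if the transition is tagged with its context, as in the setup, where transitions are sampled ``from context $c$''. With this convention, each sample has a unique $G_c$ attached, and $D'$ applies the inverse of the same $G_c$ that $E'$ used.

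We also make a measurability remark: the $G_c$ are measurable (e.g., Borel bijections), so the expectations are defined. We then conclude the argument. Finally, we note that the family $\{G_c\}$ is arbitrary, so the minimizers of $\mathcal{L}_{\text{pred}}$ form an orbit under context-dependent reparameterizations. This orbit includes, for example, different rotations $G_c$ for different contexts, which is exactly the cross-context non-identifiability the paper describes.
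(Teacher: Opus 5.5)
Your proof is correct and is the paper's argument: the pointwise substitution $G_c^{-1}\circ G_c=\mathrm{id}$ makes the predictions, and hence the per-sample losses, identical, and taking expectations over transitions and contexts finishes it; your remarks on well-definedness ($D'$ must know which $c$ applies) and on measurability are extra care the paper omits. One quibble with your closing aside: the proposition shows only that the set of minimizers is \emph{invariant} under context-dependent reparameterizations, i.e., a union of orbits, not that it forms a single orbit.
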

\begin{proof}
For any transition in context $c$, substitute the definitions:
\begin{align}
D'(x_t, E'(x_t,x_{t+1}))
&= D\!\Big(x_t,\, G_{c}^{-1}\!\big(G_{c}(E(x_t,x_{t+1}))\big)\Big) \\
&= D\big(x_t, E(x_t,x_{t+1})\big).
\end{align}
Thus the prediction is identical for every sample, and taking expectations over $(x_t,x_{t+1})$ and $c$ leaves the loss unchanged.
\end{proof}

\subsection{Implication for cross-context transfer}
Proposition~\ref{thm:latent_symmetry} implies that the latent representation is not anchored across contexts:
different contexts can realize different latent coordinate systems while attaining the same training objective.
Concretely, let $z^\star$ denote an abstract latent code producing a desired semantic effect, and suppose context $c_A$ represents it as
$z_A := G_{c_A}(z^\star)$. Applying $z_A$ in a different context $c_B$ yields
\begin{equation}
D'(x_t^{(B)}, z_A)
=
D\!\big(x_t^{(B)},\, G_{c_B}^{-1}(z_A)\big)
=
D\!\big(x_t^{(B)},\, G_{c_B}^{-1}G_{c_A}(z^\star)\big),
\end{equation}
which generally differs from the intended effect unless $G_{c_B}^{-1}G_{c_A}\approx I$.
Therefore, a code inferred in one context need not transfer as the ``same action'' in another.


\subsection*{Remark (what changes with a $\beta$-VAE KL term?)}
Many latent action models~\cite{gaoadaworld, garrido2026learninglatentactionworld} include a $\beta$-VAE regularizer with isotropic prior,
$\beta\,\mathrm{KL}(q_\phi(z\mid\cdot)\,\|\,\mathcal{N}(0,I))$.
The prior is rotationally invariant, but restricting $q_\phi$ to a factorized diagonal-Gaussian family breaks this continuous symmetry.
For $q_\phi(z\mid\cdot)=\mathcal{N}(\mu(\cdot),\mathrm{diag}(\sigma^2(\cdot)))$, the family is not closed under arbitrary orthogonal rotations:
if $z\sim\mathcal{N}(\mu,\Sigma_{\mathrm{diag}})$ and $z'=Rz$ with orthogonal $R$, then $z'$ has covariance $R\Sigma_{\mathrm{diag}}R^\top$, which is generally non-diagonal.
Requiring $R\Sigma_{\mathrm{diag}}R^\top$ to remain diagonal for all diagonal $\Sigma_{\mathrm{diag}}$ forces $R$ to be a signed permutation matrix, up to degenerate isotropic cases.
Thus, within this variational family, the remaining exact symmetries reduce to signed permutations of coordinates.
Without an explicit cross-context constraint, these discrete symmetries can still vary with context, so latent directions remain non-identifiable across contexts and are not directly comparable for transfer.

\section{Implementation Details}
\label{supp:impl-details}

\subsection{Latent Action Model}

\noindent \textbf{Architecture.}
Our LAM is implemented as a VAE-based video prediction framework consisting of a causal spatio-temporal encoder and a spatial-only decoder. Both the encoder and decoder have a Transformer architecture with 16 blocks, 1024 embedding dimensions, and 16 attention heads. The encoder applies causal masking to the temporal attention layers to prevent information leakage from future frames. Latent actions have dimension $d_z = 32$.
We train on clips of length $T{=}16$ at resolution $272{\times}480$.
For alignment, we use a projection head consisting of LayerNorm followed by a 3-layer MLP (Linear$\!\to$SiLU$\!\to$Linear$\!\to$SiLU$\!\to$Linear), projecting the pooled latent actions to the effect direction's dimension $D{=}1408$.
As the frozen effect teacher, we use V-JEPA 2 ViT-Giant/16 (384)~\cite{assran2025v} pretrained on video data.

\noindent \textbf{Training.}
We train with AdamW using learning rate $2.5{\times}10^{-5}$ and weight decay $10^{-2}$, with total batch size 32 on $8{\times}$H200 GPUs.
We set $\beta{=}2{\times}10^{-4}$ for the KL term and $\lambda{=}0.02$ for the alignment loss.
To preserve the fidelity of the effect trajectories extracted by V-JEPA 2, we disable color jitter during training.
The model is trained for 100 epochs ($\sim$146k steps), taking $\sim$4.5 days.



\subsection{Olaf-World}

\noindent \textbf{Architecture.}
We build Olaf-World on the SkyReels I2V 1.3B DiT backbone~\cite{chen2025skyreels}. 
We inject latent actions via a linear projection $32{\rightarrow}1536$ into the timestep embedding stream, with a learned gain $\gamma$ initialized to $2.0$.
For adaptation, we use LoRA with rank 16, applied to the \texttt{attn.\{q,k,v,o\}} and \texttt{ffn.\{0,2\}} linear layers in each DiT block.

\noindent \textbf{Training.}
We pretrain the latent-action-conditioned video generator for $10k$ steps using AdamW with learning rate $5{\times}10^{-5}$ and weight decay $10^{-3}$.
The training is distributed across $4{\times}$NVIDIA H200 GPUs with batch size 4 per device.
For downstream adaptation, we fine-tune only LoRA parameters (rank $r{=}16$) with learning rate $1{\times}10^{-4}$ and zero weight decay.


\section{Evaluation Details}
\label{supp:eval-details}

\subsection{Latent Space Diagnostics}

\subsubsection{Cross-context Linear Probing}
\label{supp:eval-linear-probing}

\noindent \textbf{Probe training.} 
We train a single linear classifier on top of frozen latent actions $z_t$. 
We optimize with SGD (momentum $0.9$, weight decay $10^{-6}$) for 12 epochs using a StepLR schedule.
To handle class imbalance, we use focal loss with $\gamma{=}2$.
For each training domain, we select the checkpoint that achieves the highest in-domain validation Macro-F1.

\noindent \textbf{Cross-domain evaluation.}
We evaluate the selected probe zero-shot on the other domain and report Macro-F1.

\subsubsection{Cross-context action consistency}
\label{supp:eval-heatmap}

\noindent\textbf{Prototype construction.}
For each domain $d\in\{\textsc{1st-P},\textsc{3rd-P}\}$, we sample clips and infer per-step latent actions with the pretrained LAM.
For each action class $c$, we collect the corresponding latents $\mathcal{Z}_c$ and compute the prototype (class centroid):
\begin{equation}
\mathbf{p}_c = \frac{1}{|\mathcal{Z}_c|} \sum_{\mathbf{z}\in\mathcal{Z}_c} \mathbf{z}.
\end{equation}
\vspace{-2mm}

\noindent\textbf{Cross-domain similarity matrix.}
Given the two prototype matrices $P^{(\textsc{1st-P})}\in\mathbb{R}^{C\times d_z}$ and $P^{(\textsc{3rd-P})}\in\mathbb{R}^{C\times d_z}$ ($C{=}8$ actions), we preprocess each by $\ell_2$-normalizing.
We then compute the cosine similarity heatmap:
\begin{equation}
S_{ij} = \cos\!\left(\mathbf{p}^{(\textsc{1st-P})}_i,\, \mathbf{p}^{(\textsc{3rd-P})}_j\right)= \frac{\langle \mathbf{p}^{(\textsc{1st-P})}_i, \mathbf{p}^{(\textsc{3rd-P})}_j\rangle}{\|\mathbf{p}^{(\textsc{1st-P})}_i\|_2\,\|\mathbf{p}^{(\textsc{3rd-P})}_j\|_2}.
\end{equation}
Rows correspond to $\textsc{1st-P}$ prototypes and columns to $\textsc{3rd-P}$ prototypes.

\subsection{World model}
\noindent\textbf{Visual quality.} We evaluate visual quality using selected dimensions from VBench~\cite{huang2024vbench}, including \emph{Imaging Quality} and \emph{Temporal Consistency}.

\noindent\textbf{Action accuracy via relative pose error.}
Following~\cite{hong2025relic}, we adopt a behavioral protocol to evaluate controllability.
Given a fixed action sequence, the model generates the video.
We then reconstruct the induced camera trajectories from both the ground-truth (GT) and generated videos using ViPE~\cite{huang2025vipe}, which estimates per-frame camera poses.
We align the generated trajectory to the GT trajectory using a Sim(3) Umeyama alignment to remove scale and coordinate-frame differences.
Finally, we compute \emph{Relative Pose Error} (RPE) between the GT and aligned generated trajectories, reporting (i) \textbf{RPE-trans}, the translation error magnitude, and (ii) \textbf{RPE-rot}, the rotation error angle.
Lower values indicate better agreement with the intended camera motion.

\noindent\textbf{Novel-scene dataset construction.}
Since the MIND~\cite{ye2026mind} training distribution primarily consists of near-photorealistic 3D game renderings, we curate an OOD novel-scene evaluation set of $50$ initial frames to test robustness under large appearance shifts.
The set spans diverse visual domains, including photorealistic scenes~\cite{huang2024vbench} and stylized images such as anime and oil paintings~\cite{Jiang_2023_ICCV, worldlabs2025marble}.
For evaluation, all models are conditioned on these novel-scene frames and driven by the same sequence of actions used for in-domain testing.





%

\begin{figure*}[!t]
\centering
\includegraphics[width=0.9\linewidth]{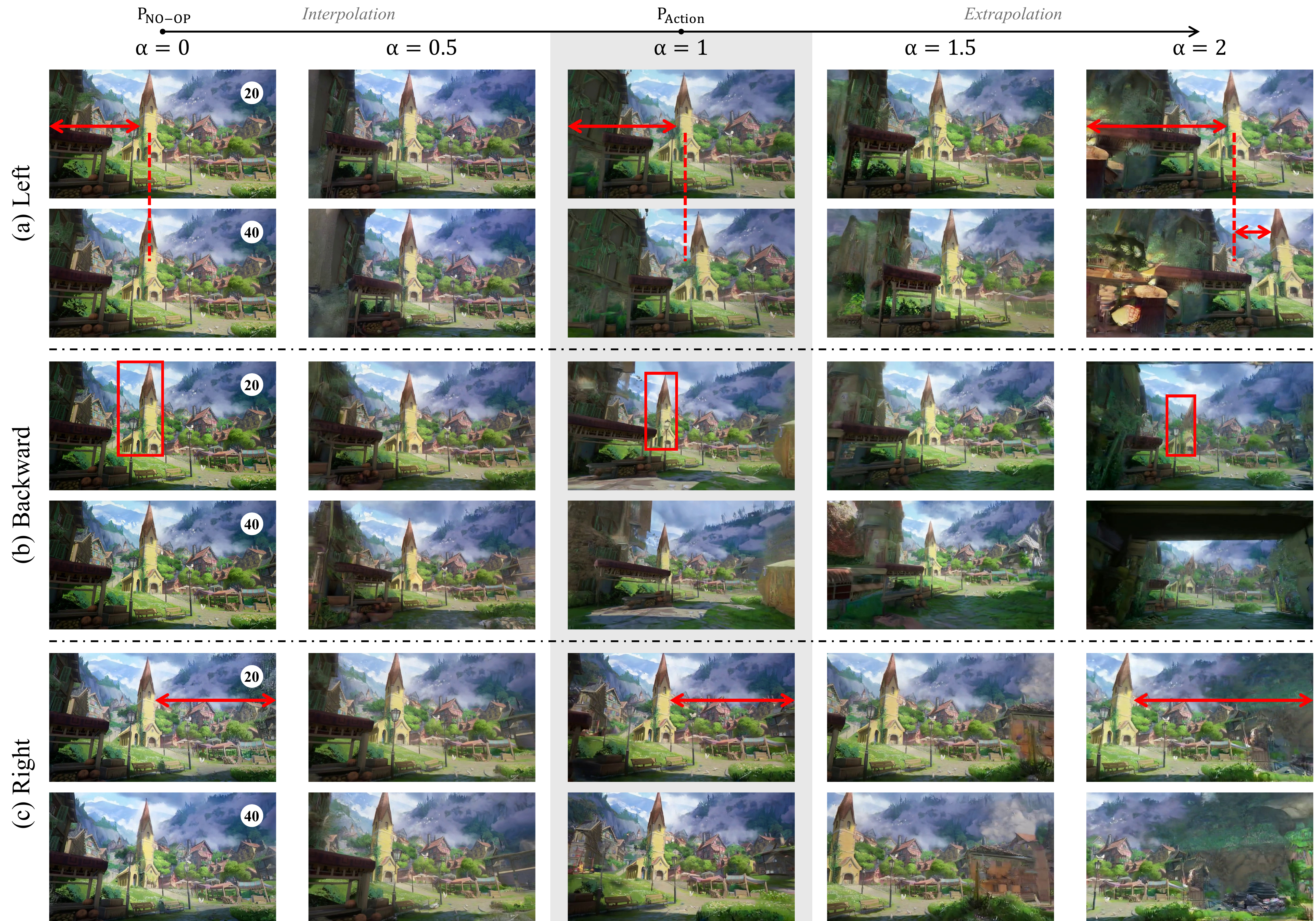}
\caption{\textbf{Latent action space interpolation and extrapolation.}
We interpolate between the no-operation prototype $P_{\textsc{no-op}}$ and an action prototype $P_{\textsc{action}}$ in the latent action space using $\alpha\in\{0,0.5,1,1.5,2\}$ (columns).
Rows correspond to actions (a) left, (b) backward, and (c) right. 
For each $\alpha$, the same interpolated latent is repeated across all action steps and applied to the same initial image. We show generated frames at timesteps 20 and 40.}
\vspace{-5mm}
\label{fig:interp}
\end{figure*}
\begin{figure*}[!t]
\includegraphics[width=\linewidth]{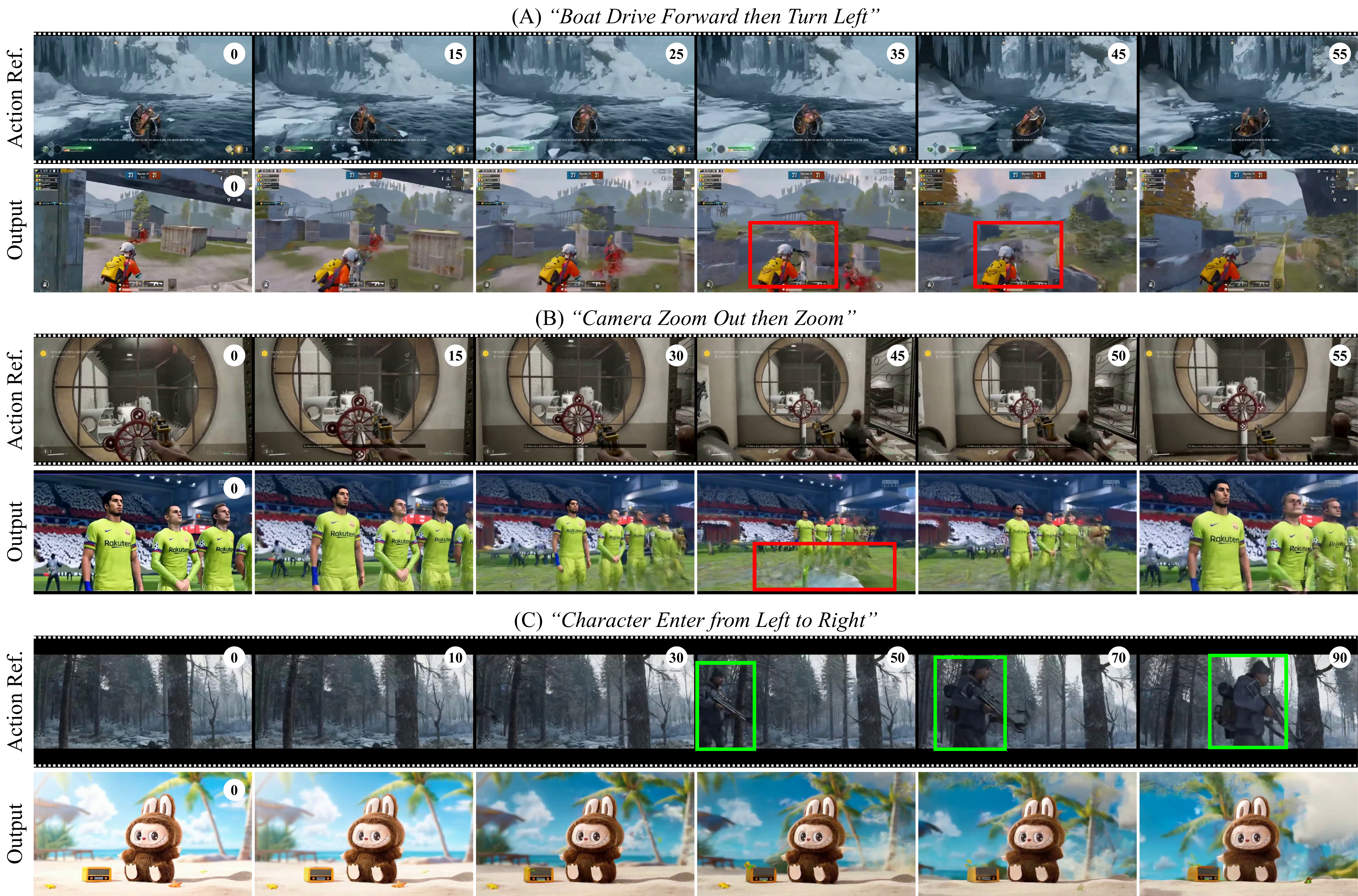}
\caption{\textbf{Failure cases generated by Olaf-World.}
}
\vspace{-7mm}
\label{fig:actiontransferfail}
\end{figure*}

\section{Additional Results}
\label{supp:moreresults}

We provide additional qualitative examples for zero-shot action-sequence transfer, data-efficient adaptation, and generalization to novel scenes. 
These results further confirm the robustness and superior performance of our method compared to baselines.
Due to the inherent difficulty of conveying dynamic video generation quality through sparsely sampled frames, we refer readers to the supplementary project page for the corresponding videos: \url{https://showlab.github.io/Olaf-World}.



\noindent \textbf{Latent Action Interpolation.} 
To further probe the local geometry of the learned latent action space, we conduct a simple qualitative linear interpolation experiment.
Using the prototypes defined in Section~\ref{supp:eval-heatmap}, for each domain $d$ and action $a$, we interpolate between the \textsc{no-op} prototype and the target action prototype:
\begin{equation}
\tilde{\mathbf{z}}_{d,a}^{(\alpha)}
=
(1-\alpha)\mathbf{p}_{d,\textsc{no-op}}+\alpha \mathbf{p}_{d,a},
\label{eq:latent_interp}
\end{equation}
where $\alpha$ controls the action strength. When $0<\alpha<1$, Eq.~\eqref{eq:latent_interp} performs interpolation, while $\alpha>1$ corresponds to extrapolation.
For each $\alpha$, we repeat the same interpolated latent across all action steps and use it to condition video generation from the same initial image.
As shown in Figure~\ref{fig:interp}, increasing $\alpha$ produces progressively stronger motion in the intended direction under the same repeated action signal, suggesting a reasonably coherent local geometry of the learned latent action space.

\noindent \textbf{Failure cases.}
Figure~\ref{fig:actiontransferfail} shows three representative failure cases:
\textbf{(A) Control--physics mismatch.}
When a transferred action would cause collisions in the target scene (\textit{e.g.}, drive forward then turn left), the model may hallucinate scene changes to remove or alter the obstacles to avoid collisions, thus preserving the intended motion. 
\textbf{(B) Degraded completion under large reveal.}
Actions such as zooming out require synthesizing a large amount of newly visible content. In these cases, extended parts of the video (\textit{e.g.}, players' legs) may appear blurry or inconsistent.
\textbf{(C) Ambiguous realization for event-driven actions.}
For actions that imply an event (\textit{e.g.}, a new character entering), the identity of the entering entity is not specified under cross-context transfer.
In our example, the model realizes the control as background/camera drift while keeping existing subjects consistent, which is a plausible relative-motion interpretation, but not the same event semantics.
We leave richer event-level transfer (\textit{e.g.}, controlled object entry) to future work.


\section{Limitations and Future Work}
\label{supp:limitation}

We outline several promising directions that could further strengthen transferable latent actions and action-conditioned world modeling.

\subsection{Effect-aligned latent actions}


\noindent \textbf{Objectives and effect targets.}
We use a simple and effective cosine alignment between latent actions and effect directions defined by feature differences from a frozen video encoder.
Exploring alternative effect targets and alignment formulations is a natural next step and may further improve robustness across diverse contexts and the structure of the learned latent action space.

\noindent \textbf{Hierarchical latents (skills).}
Our current latent actions are step-level (one latent per frame at 16\,FPS). Learning a hierarchy of latent actions, where short-horizon controls compose into longer-horizon ``skills'', may improve long rollouts, enable multi-rate control, and provide a cleaner interface for downstream decision-making~\cite{hafner2022deep, gumbsch2024learning}. 

\noindent \textbf{Toward physics-rule transfer.}
A natural next step is to augment effect-aligned latent actions with physics-grounded constraints so that transferred trajectories remain visually faithful and physically plausible.
Recent work shows video generators can be post-trained with \emph{verifiable} kinematic or collision-consistency rewards (\textit{e.g.}, Newtonian acceleration for falling objects, collision rules) to improve physical behavior~\cite{le2025gravity, zhang2026physrvg}.
A further step is to extend action-conditioned transfer to \textbf{contact-rich interactions}, which require continuous contacts between multiple objects, moving beyond navigation toward complex manipulation.

\noindent \textbf{Multi-entity dynamics and factorized control.}
Seq$\Delta$-REPA currently summarizes the observed change with a single effect signal, which can mix different sources of control, camera/ego motion, controllable agent motion, other agent behavior, and environment-driven events.
Factorizing effects (ego vs.\ others vs.\ environment) and learning entity-specific latent control could improve interpretability and enable richer multi-entity controllable world modeling.

\subsection{Latent actions for planning and reasoning}

\noindent \textbf{Planning and sampling in latent-action space.}
In this work, latent actions are used for transfer and as a control interface via an adapter.
A key next step is to plan directly on latent action sequences using the world model for imagination-based search or trajectory optimization~\cite{rybkin2018learning, hafner2023mastering, wu2023daydreamer, hafner2022deep, yu2020giril}.

\noindent \textbf{From frame-level ``visual CoT'' to latent-action traces.}
Recent work shows that large video models can exhibit emergent zero-shot capabilities~\cite{wiedemer2025video}, and video generation work has begun to use \emph{visual chain-of-thought}---\textit{e.g.}, sparse keyframes, intermediate ``thought'' prompts, or storyboard plans---as guidance to improve long-horizon coherence and controllability \cite{rotstein2025pathways, huang2025vchain, xiao2025videoauteur, Huang_2025_WACV, li2025c}.
An intriguing direction is to treat latent-action sequences as compact \emph{traces of dynamics} that are cheaper and less redundant than dense frame-level visual CoT, and to study how such traces can support evaluation, editing, and higher-level reasoning about actions and events. 

\end{document}